\documentclass[12pt]{article}
\usepackage[utf8]{inputenc}
\usepackage[margin=1.375in]{geometry}
\usepackage{parskip}
\usepackage{bibentry}
\usepackage[ruled,vlined]{algorithm2e}
\usepackage{booktabs}
\usepackage{csquotes}
\usepackage{comment}

\usepackage{caption, subcaption}
\DeclareCaptionLabelFormat{no-parens}{\textsf{\textbf{#2}}}
\captionsetup[subfigure]{font={bf,normalsize}, belowskip=0pt, skip=0pt, singlelinecheck=false,labelformat=no-parens}

\usepackage[linkcolor=black,colorlinks=true,citecolor=black,filecolor=black, backref=page]{hyperref}
\renewcommand*{\backrefalt}[4]{%
       \ifcase #1 %
         No citations.%
       \or
         p. #2.%
       \else
         pp. #2.%
\fi }

\usepackage{enumitem}
\usepackage{natbib}

\usepackage{thm-restate, amsmath, amssymb, amsfonts, mathtools}
\usepackage[ruled,vlined]{algorithm2e}

\usepackage{amsthm}
\makeatletter
\newtheorem*{rep@theorem}{\rep@title}
\newcommand{\newreptheorem}[2]{%
\newenvironment{rep#1}[1]{%
 \def\rep@title{#2 \ref{##1}}%
 \begin{rep@theorem}}%
 {\end{rep@theorem}}}
\makeatother

\newtheorem{theorem}{Theorem}[subsection]
\newreptheorem{theorem}{Theorem}
\newtheorem{fact}[theorem]{Fact}

\newreptheorem{assumption}{Assumption}
\newtheorem{lemma}[theorem]{Lemma}

\newreptheorem{lemma}{Lemma}

\newreptheorem{corollary}{Corollary}


\DeclareMathOperator*{\argmin}{arg\,min}
\newcommand{\norm}[1]{\left\lVert#1\right\rVert}
\newcommand{\snorm}[1]{\lVert#1\rVert}
\newcommand{\pder}[2]{\frac{\partial#2}{\partial #1}}
\newcommand{\spder}[2]{\partial_#1#2}
\newcommand{\der}[2]{\frac{\mathrm{d}#2}{\mathrm{d} #1}}

\newcommand{\Id}{\mathrm{Id}}
\newcommand{\evalat}[2]{\left. #1 \right\rvert_{#2}}

\newcommand{\Lt}{\mathcal{L}}
\newcommand{\Li}{L^{\mathrm{in}}}
\newcommand{\Lo}{L^{\mathrm{out}}}
\newcommand{\outg}{\nabla_\theta}
\newcommand{\eoutg}{\widehat{\nabla}_\theta}
\newcommand{\phisb}{\phi_\beta^*}
\newcommand{\phisbt}{\phi_{\theta, \beta}^*}
\newcommand{\phisz}{\phi_0^*}
\newcommand{\phiszt}{\phi_{\theta, 0}^*}
\newcommand{\phist}{\phi_{\theta}^*}

\newcommand{\phihb}{\hat{\phi}_\beta}
\newcommand{\phihz}{\hat{\phi}_0}

\title{Beyond backpropagation: bilevel optimization through implicit differentiation and equilibrium propagation}
\author{Nicolas Zucchet\textsuperscript{1}\textsuperscript{,}\thanks{To whom correspondence may be addressed: \texttt{nzucchet@inf.ethz.ch}} , João Sacramento\textsuperscript{2}\\
\\
\textsuperscript{1}Department of Computer Science, ETH Zurich, Switzerland\\
\textsuperscript{2}Institute of Neuroinformatics,\\University of Zurich and ETH Zurich, Switzerland\\
}
\date{}

\begin{document}

\maketitle

\normalsize
\begin{abstract}
    \normalsize
    This paper reviews gradient-based techniques to solve bilevel optimization problems. Bilevel optimization extends the loss minimization framework underlying statistical learning to systems that are implicitly defined through a quantity they minimize. This characterization can be applied to neural networks, optimizers, algorithmic solvers and even physical systems, and allows for greater modeling flexibility compared to the usual explicit definition of such systems. We focus on solving learning problems of this kind through gradient descent, leveraging the toolbox of implicit differentiation and, for the first time applied to this setting, the equilibrium propagation theorem. We present the mathematical foundations behind such methods, introduce the gradient estimation algorithms in detail, and compare the competitive advantages of the different approaches.
\end{abstract}

Recent years have witnessed an explosion of breakthroughs fueled by deep learning in many scientific fields such as computer vision \citep{krizhevsky_imagenet_2012}, natural language processing \citep{brown_language_2020}, game playing \citep{mnih_human-level_2015} and biology \citep{jumper_highly_2021}. There are many lessons to learn from these advances. Among them is the surprising effectiveness of gradient descent: updating the millions, or even billions, of parameters of a deep learning model through greedy gradient-following updates turns out to be extremely powerful and cheap, thanks to the backpropagation of errors algorithm \citep{linnainmaa_taylor_1976, werbos_applications_1982, rumelhart_learning_1986}.

In its standard form, backpropagation provides an efficient way of computing gradients in neural networks, but its applicability is limited to acyclic directed computational graphs whose nodes are explicitly defined. Feedforward neural networks or unfolded-in-time recurrent neural networks are prime examples of such graphs. However, there exists a wide range of computations that are easier to describe through what they \emph{achieve}, rather than by the exact sequence of calculations they perform, and that thus do not fulfill the requirements of backpropagation. This includes outputs of algorithmic solvers which provably minimize some cost function \citep{djolonga_differentiable_2017, wang_satnet_2019, vlastelica_differentiation_2020}, of learning processes that do loss minimization \citep{mackay_practical_1992, bengio_gradient-based_2000} and even of physical systems, such as biological neural networks \citep{hopfield_neurons_1984} or electrical circuits \citep{wyatt_criteria_1989, kendall_training_2020, scellier_deep_2021}, reaching a steady state. Gradient descent based on naive backpropagation cannot improve those computations, as the algorithm is not directly applicable.

In this article, we frame learning such implicitly-defined systems as a bilevel optimization problem and review how to compute the corresponding gradients through implicit differentiation methods. We then present a less explored alternative approach, which relies on the equilibrium propagation theorem, recently discovered by \citet{scellier_equilibrium_2017}.

Our article is organized as follows:
\begin{itemize}
    \item[--] In \textbf{Section 1}, we formalize bilevel optimization and discuss some examples in which it appears in machine learning from a historical perspective. We then focus on hyperparameter optimization and meta-learning to highlight the challenges behind solving bilevel optimization problems.
    \item[--] Behind the tools of interest for this article are the concept of implicit function and the so-called implicit function theorem. In \textbf{Section 2}, we provide the reader insight into why this notion is so fundamental and present the implicit function theorem in detail.
    \item[--] \textbf{Section 3} is the core of the paper: we there introduce two gradient-based approaches to solve bilevel optimization problems. The first class of methods leverages the differentiation formula provided by the implicit function theorem, while the second one builds on another theorem, the equilibrium propagation theorem. We present the mathematical foundations of the two approaches and show how to transform them into efficient gradient estimation algorithms. We theoretically analyze the quality of the gradient those algorithms produce as a function of the different sources of approximation they introduce.
    \item[--] In \textbf{Section 4}, we compare the different algorithms we presented in the last section with each other, exhibiting their qualities and limitations. We then discuss when these methods shine by contrasting them with the following alternatives: backpropagation through the optimization process and black-box optimization strategies.
\end{itemize}
This article has two levels of reading: one for the reader interested in learning new gradient estimation methods and one for the reader who wishes to know the mathematical foundations behind them. For this reason, we mark all theory-oriented sections with the symbol \dag. They can be skipped without hindering the understanding of the rest.


\section{Bilevel optimization in machine learning}

\subsection{Bilevel optimization}

The high-level description of bilevel optimization that we briefly sketched above contains two elements: an inner optimization process which describes what the system does, and an outer loss function that ultimately measures how good the result of this process is. We now make this formulation more precise.

Let us denote by $\phi$ the parameters that are optimized by the inner process to minimize the inner loss function $\Li$. The system we consider has some parameters $\theta$ that we want to learn. We assume that they modify the behavior of the system through $\Li$. The computation performed by the system is then
\begin{equation*}
    \phist \in \argmin_\phi \Li(\phi, \theta).
\end{equation*}
We use the subscript $\theta$ to underline that $\phist$ is an implicit function of $\theta$ (as $\Li$ depends on $\theta$), as it can be any local minimizer of the inner loss $\Li$. Note that we do not make any assumption on how to obtain $\phist$, as we only assume that it minimizes a loss function.

The outer loss $\Lo$ measures the quality of the output $\phist$ of the system and plays the usual role of a loss function in machine learning. We can then frame learning of the parameters $\theta$ as the minimization of the outer loss, which leads to the bilevel optimization problem that we study in this article:
\begin{equation}
    \label{eqn:bilevel_optim}
    \begin{split}
        & \min_\theta \Lo(\phist, \theta)\\
        & \text{s.t.}~ \phist \in \argmin_\phi \Li(\phi, \theta).
    \end{split}
\end{equation}

\subsection{Historical perspective}

Bilevel optimization was originally introduced in the 1930s by von Stackelberg \citep{von_stackelberg_market_1934} in the context of two-players games with a leader and a follower, and later extensively studied in the field of optimization as a way to model optimization problems that contain different objectives \citep{bard_practical_1998}. Closer to the learning formulation of interest to this article is bilevel optimization as introduced for the training of recurrent neural networks in the late 1980s. Instead of describing neural dynamics by their dynamics and then backpropagating through them, the neural activity is assumed to converge to a steady-state. This view led to the introduction of the recurrent backpropagation algorithm \citep{almeida_learning_1990, pineda_generalization_1987}, one of the algorithms we review in Section~\ref{sec:approximation_methods}. Often, converging dynamics are described as minimizing an energy function \citep{hopfield_neurons_1984, cohen_absolute_1983, scellier_equilibrium_2017, whittington_approximation_2017}. This offers stability guarantees and allows connecting to physical systems such as resistive \citep{millar_general_1951, hutchinson_computing_1988, wyatt_criteria_1989, kendall_training_2020} or flow \citep{stern_supervised_2021} networks.

This implicit characterization of entire neural networks, or layers of them, introduced in the early days of deep learning has regained considerable interest recently \citep{amos_optnet_2017, djolonga_differentiable_2017, agrawal_differentiable_2019, gould_deep_2021}. Notably, a class of such implicit networks called deep equilibrium models\footnote{See \citet{kolter_deep_2021} for a tutorial on the topic.} \citep{bai_deep_2019, bai_multiscale_2020} have achieved state-of-the-art performance in many problem domains. These results demonstrate that the performance of large deep feedforward neural networks can be matched by neural networks with far fewer parameters, when the computations they perform are iterated repeatedly until equilibrium. As we will later see in more detail, this results in large memory savings not only during inference but also during learning. Bilevel optimization also appears in many other forms in modern machine learning, going from hyperparameter optimization and meta-learning, to generative adversarial networks \citep{goodfellow_generative_2014, metz_unrolled_2017} and reinforcement learning \citep{pfau_connecting_2016, rajeswaran_game_2020, nikishin_control-oriented_2022}. We zoom in on hyperparameter optimization and meta-learning in the next section as this is one of the problems for which bilevel optimization is mostly used nowadays. We refer the curious reader to Appendix~\ref{app:bo_formulations} for a more extensive presentation of some existing formulations.

\subsection{Hyperparameter optimization and meta-learning}
\label{subsec:hp_ml}

\paragraph{Hyperparameter optimization.}

Let us consider the following problem: we want to find the parameters $\theta$ of a learning algorithm, its hyperparameters, that generate model parameters $\phi$ which generalize well. We measure generalization performance by testing the learned model on held-out data. Furthermore, as is conventionally done, we assume that model parameters are obtained by maximum a posteriori estimation \citep{mackay_practical_1992, foo_efficient_2007} or, alternatively, by regularized empirical risk minimization \citep{bengio_gradient-based_2000, goutte_adaptive_1998}. This leads to the following bilevel optimization problem:
\begin{equation}
    \begin{split}
        & \min_\theta L(\phi_\theta^*, \mathcal{D}^\text{val})\\
        & \mathrm{s.t.}~ \phi_\theta^* \in \argmin_\phi L(\phi, \mathcal{D}^\text{train}) + R(\phi, \theta),
    \end{split}
\end{equation}
where $L$ is the negative log-likelihood that measures the discrepancy between the predictions of a neural network parameterized by $\phi$ and the target outputs on a dataset $\mathcal{D}$, $\mathcal{D}^\text{train}$ is the training set, $\mathcal{D}^\text{val}$ is a held-out dataset and $R(\phi, \theta)$ is the negative log-prior (in the Bayesian view) or a regularizing term on $\phi$ (in the frequentist view). For instance, a very common choice is to take $R(\phi, \theta) = \lambda \snorm{\phi}^2$; in this case, the hyperparameters are $\theta = \{ \lambda \}$. A zoo of different interactions between $\phi$ and $\theta$ can be considered, and we mention a few of them in Appendix~\ref{app:interactions_meta_learning}.
 
When $\theta$ is low-dimensional, black-box optimization methods such as grid or random search \citep{bergstra_random_2012} can search for the best hyperparameters. However, this becomes intractable for high-dimensional hyperparameters. Alternatively, one could backpropagate through the training trajectory, but this does not scale well with the number of updates, as the entire history of parameters must be stored during training and then revisited in reverse-time order. The implicit methods we present in Section~\ref{sec:approximation_methods} do not suffer from these limitations. They can scale to a large number of hyperparameters and long training procedures.

\paragraph{Meta-learning.}
The previous formulation can be extended to meta-learning \citep{thrun_learning_1998, schmidhuber_evolutionary_1987, bengio_learning_1990, finn_model-agnostic_2017, bertinetto_meta-learning_2019} by considering several tasks. The goal is now to learn meta-parameters $\theta$ that yield a learning algorithm that generalizes well on a family of tasks: ideally, the algorithm will achieve low loss on unseen tasks, which are assumed to be similar to those encountered during meta-learning. The corresponding optimization problem is then:
\begin{equation}
    \begin{split}
        & \min_\theta \mathbb{E}_{\tau} \left [ \Lo(\phi_{\tau,\theta}^*, \theta, \mathcal{D}_\tau^\text{val}) \right ]\\
        & \mathrm{s.t.}~ \phi_{\tau, \theta}^* \in \argmin_\phi \Li(\phi, \theta, \mathcal{D}_\tau^\text{train}),
    \end{split}
\end{equation}
where $\Li$ and $\Lo$ are the same kind of loss used for hyperparameter optimization with the difference that the data on which they are defined is now dependent on the task $\tau$. In practice this is solved by stochastic gradient descent on the expected outer loss over the task distribution: one task (or more) is sampled and the gradient corresponding to that task is approximated in the same way it would be for hyperparameter optimization. Thus, black-box optimization methods and backpropagation through training suffer from the same problems we highlighted above.


\section{The implicit function theorem}

Studying implicit functions is about understanding the relationship between two variables $y$ and $x$, when they are linked together through an equation $f(x,y)=0$. The first apparitions of implicit functions can be traced back to \citet{descartes_geometrie_1637} and \citet{newton_methodis_1670} who considered the behavior of some specific curves \citep{krantz_implicit_2003}. \citet{cauchy_turin_1831} laid down the theoretical foundations behind the implicit function theorem and the extended modern multivariate version of the theorem was introduced by Ulysse Dini in lecture notes\footnote{There is however no trace of the implicit function theorem in the 69 original papers Dini published.} supporting his teaching on infinitesimal analysis at the University of Pisa during the academic year 1877-1878 \citep{scarpello_historical_2002}. We end this historical note with a citation from \citet{euler_introductio_1748} (as translated by John D. Blanton) that perfectly captures why implicit functions are relevant in mathematics in general and which particularly relates to the philosophy behind bilevel optimization:
\blockquote{Indeed frequently algebraic functions cannot be expressed explicitly. For example, consider the function $Z$ of $z$ defined by the equation, $z^5 = az^2 Z^3 - bz^4Z^2 + cz^3 Z - 1$. Even if this equation cannot be solved, still it remains true that $Z$ is equal to some expression composed of the variable $z$ and constants, and for this reason $Z$ shall be a function of $z$.}

Implicit functions are inherent to bilevel optimization as the function $\phist$ used in \eqref{eqn:bilevel_optim} satisfies $\partial_\phi \Li(\phist, \theta) = 0$.
Understanding how implicit functions behave is therefore crucial; this is what the implicit function theorem brings. More precisely, it contains two statements: first, it ensures $\theta \mapsto \phist$ exists locally, and second, it yields an analytical formula for the outer gradient $\nabla_\theta$ associated with our problem:
\begin{equation}
    \label{eqn:outg_implicit_differentiation}
    \begin{split}
        \outg^\top & \coloneqq \der{\theta}{} \Lo(\phist, \theta)\\
        & = \pder{\theta}{\Lo}(\phist, \theta) - \pder{\phi}{\Lo}(\phist, \theta) \left ( \pder{\phi^2}{^2\Li}(\phist, \theta) \right )^{-1} \pder{\theta \partial \phi}{^2 \Li}(\phist, \theta). \\
    \end{split}
\end{equation}
We derive this formula in Section~\ref{subsec:outg_derivation}.

The attentive reader will have noticed that we are using two different notations for derivatives in the outer gradient formula. Let us clarify the convention we follow. We use $\partial_x$ to denote partial derivatives with respect to $x$ and $\mathrm{d}_x$ for total derivatives. There is no difference between partial and total derivatives when the function only depends on one variable. In the multivariate case, this is different. We here use the $\partial_x$ notation when the derivative is straightforward to calculate, as for the gradient of a loss function, and $\mathrm{d}_x$ when the function has some hidden dependency on $x$, as it occurs for implicit functions. We consider both partial and total derivatives of scalar functions to be row vectors, so that $\partial_\theta \Lo$ is a row vector of size $|\theta|$, $\spder{\phi}{\Lo}$ a row vector of size $|\phi|$, $\spder{\phi^2}{\Li}$ a squared matrix of size $|\phi|\times |\phi|$ and $\spder{\theta\partial_\phi}{\Li}$ a matrix of size $|\phi| \times |\theta|$.

Using the outer gradient $\outg$ for gradient descent would in principle yield an efficient algorithm to solve our bilevel optimization problem with the nice property that it only requires knowing $\phist$. Unlike backpropagation-through-time, storing the sequence of intermediate parameter values generated by the learning algorithm is no longer needed. However, computing the outer gradient requires computing the Hessian $\spder{\phi}{^2 \Li}(\phist, \theta)$, which is a second-order derivative, and inverting it. Those two operations are costly and often intractable in large-scale machine learning problems. We therefore need to approximate the outer gradient $\outg$ if we want to use it for practical purposes. This is what the methods we present in Section \ref{sec:approximation_methods} do.

The rest of the section is dedicated to explaining in further detail the statements and consequences of the implicit function theorem for our bilevel optimization problem. It can be skipped on a first reading without impairing the understanding of the rest of the article.

The usual formulation of the implicit function theorem \citep{dontchev_implicit_2009} encompasses both the existence statement and the differentiation formula. We present and discuss next the two parts separately for the sake of clarity.

\subsection{Existence of implicit functions \dag}

In the bilevel optimization formulation (\ref{eqn:bilevel_optim}), we used the implicit function $\phist$ without ensuring that it is correctly defined. The first part of the implicit function theorem ensures that such a function exists.

\begin{theorem}[Existence of implicit functions \citep{dontchev_implicit_2009}]
    \label{thm:ift_existence}
    Let $f$ be continuously differentiable and $(\bar{\phi}, \bar{\theta})$ be such that $f(\bar{\phi}, \bar{\theta}) = 0$. If the Jacobian matrix $\partial_\phi f(\bar{\phi}, \bar{\theta})$ is invertible, there exists a unique continuous implicit function $\theta \mapsto \phi^*_\theta$ defined in a neighborhood of $\bar{\theta}$ such that $\phi_{\bar{\theta}}^* = \bar{\phi}$ and which verifies $f(\phi^*_\theta, \theta)=0$ for all $\theta$ in that neighborhood. 
\end{theorem}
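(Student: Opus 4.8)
The plan is to prove the statement with the Banach fixed point theorem, by turning the equation $f(\phi,\theta)=0$ into a fixed-point problem whose iteration map is a uniform contraction in $\phi$ for $\theta$ near $\bar\theta$. Set $A \coloneqq \partial_\phi f(\bar\phi,\bar\theta)$, which is invertible by hypothesis, and for each $\theta$ define $\Phi_\theta(\phi) \coloneqq \phi - A^{-1} f(\phi,\theta)$. Because $A^{-1}$ is invertible, $\phi$ solves $f(\phi,\theta)=0$ if and only if it is a fixed point of $\Phi_\theta$; in particular $\Phi_{\bar\theta}(\bar\phi)=\bar\phi$. The point of introducing this map is that $\partial_\phi\Phi_\theta(\phi) = \Id - A^{-1}\partial_\phi f(\phi,\theta)$ vanishes at $(\bar\phi,\bar\theta)$.

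First I would use the continuity of $\partial_\phi f$ (guaranteed by $f \in C^1$) to pick radii $r,\rho>0$ with $\norm{\partial_\phi\Phi_\theta(\phi)} \le \tfrac12$ whenever $\norm{\phi-\bar\phi}\le r$ and $\norm{\theta-\bar\theta}\le\rho$; the mean value inequality then makes $\Phi_\theta$ a $\tfrac12$-Lipschitz map in $\phi$ on the closed ball $\bar B(\bar\phi,r)$, for every such $\theta$. Next I would shrink $\rho$, using $f(\bar\phi,\bar\theta)=0$ and continuity of $f$ in $\theta$, so that $\norm{A^{-1}f(\bar\phi,\theta)}\le r/2$ for $\norm{\theta-\bar\theta}\le\rho$; then for $\phi\in\bar B(\bar\phi,r)$ we get $\norm{\Phi_\theta(\phi)-\bar\phi} \le \norm{\Phi_\theta(\phi)-\Phi_\theta(\bar\phi)} + \norm{\Phi_\theta(\bar\phi)-\bar\phi} \le \tfrac12 r + \tfrac12 r = r$, so $\Phi_\theta$ maps $\bar B(\bar\phi,r)$ into itself. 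The Banach fixed point theorem then provides, for each $\theta$ with $\norm{\theta-\bar\theta}\le\rho$, a unique $\phi^*_\theta\in\bar B(\bar\phi,r)$ with $f(\phi^*_\theta,\theta)=0$; uniqueness forces $\phi^*_{\bar\theta}=\bar\phi$. This defines $\theta\mapsto\phi^*_\theta$, and the uniqueness clause of the theorem is precisely the uniqueness of the fixed point inside $\bar B(\bar\phi,r)$.

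It remains to check that $\theta\mapsto\phi^*_\theta$ is continuous, which follows from a standard perturbation estimate: since $\phi^*_\theta-\phi^*_{\theta'} = \Phi_\theta(\phi^*_\theta) - \Phi_{\theta'}(\phi^*_{\theta'})$, inserting $\Phi_\theta(\phi^*_{\theta'})$ and using the contraction bound gives $\norm{\phi^*_\theta-\phi^*_{\theta'}} \le \tfrac12\norm{\phi^*_\theta-\phi^*_{\theta'}} + \norm{A^{-1}\!\left(f(\phi^*_{\theta'},\theta)-f(\phi^*_{\theta'},\theta')\right)}$, hence $\norm{\phi^*_\theta-\phi^*_{\theta'}} \le 2\norm{A^{-1}}\,\norm{f(\phi^*_{\theta'},\theta)-f(\phi^*_{\theta'},\theta')}$, which tends to $0$ as $\theta'\to\theta$ by continuity of $f$. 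The main obstacle is not any single computation but the bookkeeping of the quantifiers: the radius $r$ must be fixed first from the continuity of $\partial_\phi f$, and only afterwards may $\rho$ be chosen (and possibly further shrunk) so that the uniform contraction estimate and the self-mapping property hold simultaneously; reversing this order is the classical pitfall. Everything else — the fixed-point argument and the Lipschitz estimates — is routine once $\Phi_\theta$ is set up.
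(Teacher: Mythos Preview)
Your proof is correct and follows the classical contraction-mapping route to the implicit function theorem: set $\Phi_\theta(\phi)=\phi-A^{-1}f(\phi,\theta)$, use continuity of $\partial_\phi f$ to get a uniform $\tfrac12$-contraction on a ball, shrink the $\theta$-neighborhood so the ball is mapped into itself, apply Banach, and deduce continuity of $\theta\mapsto\phi^*_\theta$ from a standard perturbation estimate. The bookkeeping you flag (fix $r$ first, then shrink $\rho$) is handled correctly.

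There is, however, nothing in the paper to compare against. The paper does not prove Theorem~\ref{thm:ift_existence}; it is stated with a citation to \citet{dontchev_implicit_2009} and used as a black box. The only argument the paper actually supplies in this section is for the companion differentiation formula (Theorem~\ref{thm:ift_differentiation}), obtained by differentiating the identity $f(\phi^*_\theta,\theta)=0$ once existence of a differentiable implicit function is granted. Your Banach fixed-point argument is exactly the standard proof one would find in the cited reference.
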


Once applied to the constraint $f = \partial_\phi \Li$ that follows from the local minimality constraint in \eqref{eqn:bilevel_optim}, the invertibility condition becomes an invertibility condition on the Hessian $\partial_\phi^2 \Li(\phist, \theta)$ and the implicit function verifies $\partial_\phi \Li(\phist, \theta)=0$ on the neighborhood on which it is defined. Note that if $\bar{\phi}$ is a minimizer of $\Li(\cdot, \bar{\theta})$ then $\phist$ will also be as long as $\Li$ is twice continuously differentiable\footnote{This can be obtained by remarking that 1. the smallest eigenvalue of an invertible Hessian is strictly positive and 2. the smallest eigenvalue of $\partial_\phi^2 \Li(\phist, \theta)$ is a continuous function of $\theta$. This implies that for $\theta$ in the neighborhood of $\bar{\theta}$ considered in Theorem~\ref{thm:ift_existence}, the smallest eigenvalue of $\partial_\phi^2 \Li(\phist, \theta)$ is strictly positive and hence that $\phist$ is a local minimizer of $\Li$ for every $\theta$ in this neighborhood.}. The implicit function theorem is purely local in the sense that several implicit functions can cohabit for a given $\theta$ but in different regions of the $\phi$ space, as shown on Figure~\ref{fig:example_ift}.B for $\theta > 0$. This why we use the notation $\phist \in \argmin_\phi \Li(\phi, \theta)$ in (\ref{eqn:bilevel_optim}): the problem is still well defined even if there exists several local minima for the same $\theta$.

The main assumption of Theorem~\ref{thm:ift_existence} applied to bilevel optimization is the invertibility of the Hessian at $(\bar{\phi}, \bar{\theta})$. Without this assumption, the graph associated with the minimizers can split, as illustrated in the following example. Let $\Li(\phi, \theta) := \phi^4 - \theta \phi^2$ for $\phi$ and $\theta$ real variables. We plot the graph of this function for several $\theta$ values on Figure~\ref{fig:example_ift}.A. The Hessian, here a second-order derivative, is null when $(\phi,\theta)=(0,0)$ (hence not invertible). A branching behavior occurs at this point since there exists a unique minimizer (which is also the only stationary point) of the function at $\phi=0$ when $\theta$ is negative and three otherwise, see Figure~\ref{fig:example_ift}.B. The graph associated with the implicit functions therefore splits into 3 branches at $\theta=0$, making it impossible to properly define an implicit function in this neighborhood.
\begin{figure}[ht] 
    \centering
    \begin{subfigure}{180pt}
        \caption{}
        \includegraphics{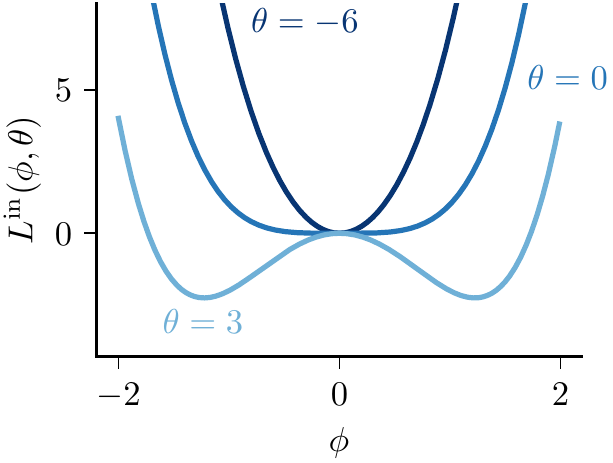}
    \end{subfigure}
    \hspace{20pt}
    \begin{subfigure}{180pt}
        \caption{}
        \includegraphics{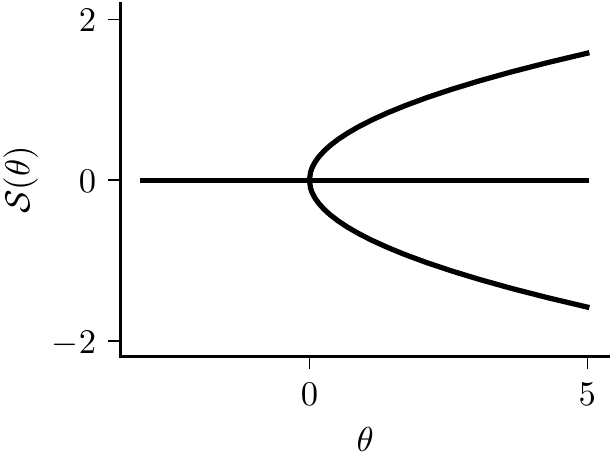}
    \end{subfigure}
    \caption{(A) Visualization of the function $\Li : (\phi, \theta) \mapsto \phi^4 - \theta \, \phi^2$ for several $\theta$ values. (B)  When $\theta$ equals 0, the Hessian at $\phi=0$ is non-invertible which implies that there is no implicit function defined around $\phi =0$, as shown on the graph of the solution mapping $\mathcal{S}(\theta) := \{ \phi \,|\, \partial_\phi \Li(\phi, \theta)=0 \}$ associated to the equilibrium condition $\partial_\phi \Li(\phi, \theta)=0$.}
    \label{fig:example_ift}
\end{figure}

\subsection{Analytical formula for the outer gradient \dag}
\label{subsec:outg_derivation}

Once we know that an implicit function exists, we would like to know how it locally reacts to changes in $\theta$, i.e., if it is differentiable, and if so, what is its derivative. This is what the second part of the implicit function theorem brings.

\begin{theorem}[Differentiating implicit functions \citep{dontchev_implicit_2009}]
    \label{thm:ift_differentiation}
    Under the assumptions of Theorem \ref{thm:ift_existence}, the implicit function $\phist$ defined in Theorem \ref{thm:ift_existence} is differentiable and
    \begin{equation*}
        \der{\theta}{\phi_\theta^*} = -\left(\pder{\phi}{f}(\phist, \theta)\right)^{-1} \pder{\theta}{f}(\phist, \theta).
    \end{equation*}
\end{theorem}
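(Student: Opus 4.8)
The plan is to build directly on Theorem~\ref{thm:ift_existence}, which already provides a continuous map $\theta \mapsto \phist$ with $f(\phist, \theta) = 0$ on a neighborhood of $\bar\theta$ and $\phi_{\bar\theta}^* = \bar\phi$. The only thing that remains to be shown is that this map is differentiable; once differentiability is known, the formula follows by differentiating the identity $f(\phist, \theta) = 0$ with the chain rule, $\pder{\phi}{f}(\phist,\theta)\,\der{\theta}{\phist} + \pder{\theta}{f}(\phist,\theta) = 0$, and multiplying by $(\pder{\phi}{f}(\phist,\theta))^{-1}$. So the work is entirely in establishing differentiability. As a first step, I would shrink the neighborhood so that $\pder{\phi}{f}(\phist,\theta)$ is invertible for \emph{every} $\theta$ in it: since $f$ is continuously differentiable and $\theta \mapsto \phist$ is continuous, the map $\theta \mapsto \pder{\phi}{f}(\phist,\theta)$ is continuous, and invertible matrices form an open set containing $\pder{\phi}{f}(\bar\phi,\bar\theta)$, so its preimage is an open neighborhood of $\bar\theta$.

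Next I would fix an arbitrary $\theta_0$ in this neighborhood, write $\phi_0 \coloneqq \phi_{\theta_0}^*$ and $\phi \coloneqq \phist$ for $\theta$ near $\theta_0$, and use $f(\phi,\theta) = 0 = f(\phi_0,\theta_0)$ together with differentiability of $f$ at $(\phi_0,\theta_0)$:
\begin{equation*}
    0 = \pder{\phi}{f}(\phi_0,\theta_0)(\phi-\phi_0) + \pder{\theta}{f}(\phi_0,\theta_0)(\theta-\theta_0) + r(\phi,\theta),
\end{equation*}
with $\norm{r(\phi,\theta)} = o(\norm{\phi-\phi_0} + \norm{\theta-\theta_0})$. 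Inverting $\pder{\phi}{f}(\phi_0,\theta_0)$ rearranges this into
\begin{equation*}
    \phi - \phi_0 = -\left(\pder{\phi}{f}(\phi_0,\theta_0)\right)^{-1}\pder{\theta}{f}(\phi_0,\theta_0)(\theta-\theta_0) - \left(\pder{\phi}{f}(\phi_0,\theta_0)\right)^{-1} r(\phi,\theta).
\end{equation*}

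The step I expect to be the main obstacle is upgrading this to an honest first-order expansion in $\theta - \theta_0$, which requires an a priori estimate $\norm{\phi-\phi_0} = O(\norm{\theta-\theta_0})$. I would obtain it by bootstrapping: continuity of $\theta \mapsto \phist$ gives $\norm{\phi - \phi_0}\to 0$ as $\theta\to\theta_0$, so for $\theta$ close enough the remainder satisfies $\norm{(\pder{\phi}{f}(\phi_0,\theta_0))^{-1} r(\phi,\theta)} \le \tfrac12\norm{\phi-\phi_0} + C\norm{\theta-\theta_0}$; substituting into the displayed identity and absorbing $\tfrac12\norm{\phi-\phi_0}$ on the left yields $\norm{\phi-\phi_0} \le C'\norm{\theta-\theta_0}$. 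With this bound, $r(\phi,\theta) = o(\norm{\phi-\phi_0} + \norm{\theta-\theta_0}) = o(\norm{\theta-\theta_0})$, so the identity becomes $\phi - \phi_0 = -(\pder{\phi}{f}(\phi_0,\theta_0))^{-1}\pder{\theta}{f}(\phi_0,\theta_0)(\theta-\theta_0) + o(\norm{\theta-\theta_0})$, which is precisely the statement that $\phist$ is differentiable at $\theta_0$ with $\der{\theta}{\phist} = -(\pder{\phi}{f}(\phist,\theta))^{-1}\pder{\theta}{f}(\phist,\theta)$ evaluated there. Since $\theta_0$ was arbitrary, this proves the theorem, and the chain-rule computation mentioned above serves as a consistency check on the formula.
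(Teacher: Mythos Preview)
Your proof is correct, but it does considerably more work than the paper's. The paper treats the differentiability of $\phist$ as part of the cited result and only supplies the one-line chain-rule derivation of the formula: differentiate the identity $f(\phist,\theta)=0$ in $\theta$ to get $\pder{\theta}{f}(\phist,\theta)+\pder{\phi}{f}(\phist,\theta)\,\der{\theta}{\phist}=0$, then solve for $\der{\theta}{\phist}$. You instead \emph{prove} differentiability from the first-order expansion of $f$ and a bootstrapping estimate $\snorm{\phist-\phi_{\theta_0}^*}=O(\snorm{\theta-\theta_0})$, and recover the formula as the linear part of that expansion. Your route is self-contained and makes explicit why invertibility of $\pder{\phi}{f}$ is exactly what is needed to absorb the remainder; the paper's route is shorter because the analytic work is delegated to the reference, leaving only the algebraic rearrangement.
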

\begin{proof}
    The derivation of the previous formula is relatively straight-forward once we know the differentiable implicit function exists as it only requires differentiating through the constraint using the chain rule: as $f(\phist, \theta) = 0$ for all $\theta$ on which $\phist$ is defined, we have
    \begin{equation*}
        \begin{split}
            0 & = \der{\theta}{}f(\phist,\theta)\\
            & = \pder{\theta}{f}(\phist, \theta) + \pder{\phi}{f}(\phist, \theta)\der{\theta}{\phist},
        \end{split}
    \end{equation*}
which yields the desired formula after rearranging the different terms.
\end{proof}

We can then use this formula to obtain $\mathrm{d}_\theta \phist$ for our bilevel optimization problem
\begin{equation}
    \der{\theta}{\phist} = -\left ( \pder{\phi^2}{^2\Li}(\phist, \theta) \right )^{-1} \pder{\theta \partial \phi}{^2 \Li}(\phist, \theta).
\end{equation}
Together with the chain rule, this is just what we need to obtain an expression for the outer gradient:
\begin{equation}
    \begin{split}
        \nabla_\theta^\top & = \der{\theta}{} \Lo(\phist, \theta)\\
        & = \pder{\theta}{\Lo}(\phist, \theta) + \pder{\phi}{\Lo}(\phist, \theta) \der{\theta}{\phist} \\
        & = \pder{\theta}{\Lo}(\phist, \theta) - \pder{\phi}{\Lo}(\phist, \theta) \left ( \pder{\phi^2}{^2\Li}(\phist, \theta) \right )^{-1} \pder{\theta \partial \phi}{^2 \Li}(\phist, \theta). \\
    \end{split}
\end{equation}
Note that the implicit function theorem and all methods we present here only require the stationarity condition $\partial_\phi \Li(\phist, \theta)=0$ to be satisfied, and not the more restrictive minimality assumption $\phist \in \argmin_\phi \Li(\phi, \theta)$. The methods we introduce in the next section can therefore be easily be extended to solve any optimization problem of the form
\begin{equation}
    \begin{split}
        & \min_\theta \Lo(\phi_\theta^*, \theta)\\
        & \mathrm{s.t.}~ \partial_\phi \Li(\phist, \theta)=0.
    \end{split}
\end{equation}


\section{Approximations of the outer gradient}

\label{sec:approximation_methods}

As we mentioned in the last section, computing the outer gradient using its analytical formula
\begin{equation}
    \label{eqn:outg_implicit_differentiation_v2}
    \outg^\top = \pder{\theta}{\Lo}(\phist, \theta) - \pder{\phi}{\Lo}(\phist, \theta) \left ( \pder{\phi^2}{^2\Li}(\phist, \theta) \right )^{-1} \pder{\theta \partial \phi}{^2 \Li}(\phist, \theta)
\end{equation}
is not feasible in most practical applications of bilevel optimization: we need approximations. Different methods exist to do so. We classify them into two different categories: implicit differentiation methods that approximate the outer gradient by directly using the analytical formula (\ref{eqn:outg_implicit_differentiation_v2}) obtained with the implicit function theorem and equilibrium propagation which leverages an alternative formulation for the outer gradient that we will later present. Note that we have here written the derivative with respect to outer parameters $\theta$ but everything can be transposed to derivatives with respect to inputs, thus allowing us to backpropagate through implicitly defined layers in deep architectures \citep{amos_optnet_2017, gould_deep_2021}.

In the following, we provide intuition behind the different methods, exhibit their fundamental similarities and differences, and compare their theoretical guarantees.

\begin{algorithm}[ht]
    \caption{Side-by-side comparison of implicit differentiation (Section \ref{sec:idb_methods}) and equilibrium propagation (Section \ref{sec:epb_methods}) methods}
    \label{alg:comparison}
    \KwResult{Approximate solution $\theta$ of the bilevel optimization problem (\ref{eqn:bilevel_optim})}
    \For{i = 1, ..., n}{
        Minimize $\Li(\phi,\theta)$ with respect to $\phi$ and note $\hat{\phi}$ the approximate result;\\
        \textbf{Implicit differentiation}\\
        \Indp
        Minimize the quadratic form
        \begin{equation*}
            \pi \mapsto \frac{1}{2}\, \pi\,\pder{\phi^2}{^2\Li}(\hat{\phi}, \theta)\,\pi^\top - \pi \pder{\phi}{\Lo}(\hat{\phi},\theta)^\top
        \end{equation*}
        through e.g. gradient descent or conjugate gradient and note $\hat{\pi}$ the approximate result;\\
        Estimate the outer gradient with
        \begin{equation*}
            \eoutg^\top := \pder{\theta}{\Lo}(\hat{\phi}, \theta) - \hat{\pi} \pder{\theta\partial\phi}{^2 \Li}(\hat{\phi}, \theta);
        \end{equation*}
        \Indm
        \textbf{Equilibrium propagation}\\
        \Indp
        Minimize $\Lt(\phi, \theta, \beta) = \Li(\phi, \theta) + \beta \Lo(\phi, \theta)$ with respect to $\phi$ for some small non-zero $\beta$ value (potentially for more $\beta$ values if needed), starting from $\hat{\phi}$, and note $\hat{\phi}_{\beta}$ the approximate result;\\
        Estimate the outer gradient with
        \begin{equation*}
            \eoutg^\top := \frac{1}{\beta}\left ( \pder{\theta}{\Lt}(\hat{\phi}_{\beta},\theta,\beta) - \pder{\theta}{\Lt}(\hat{\phi},\theta,0) \right )
        \end{equation*}
        or with an estimator that uses more points;\\
        \Indm
        Update $\theta$ using $\widehat{\nabla}_\theta$;
    }
\end{algorithm}

\subsection{Implicit differentiation}

\label{sec:idb_methods}

\paragraph{Gradient computation as minimization of a quadratic form.} Computing and inverting Hessians are costly operations (respectively quadratic and cubic in the size of the differentiated parameter) so the inverse Hessian term in \eqref{eqn:outg_implicit_differentiation_v2} must often be approximated in practice. The first key insight needed for those methods is to iteratively approximate the row vector
\begin{equation}
    \label{eqn:def_pistar}
    \pi^* \coloneqq \pder{\phi}{\Lo}(\phist, \theta) \left ( \pder{\phi^2}{^2\Li}(\phist, \theta) \right )^{-1}
\end{equation}
by minimizing the quadratic form
\begin{equation}
    \label{eqn:quadratic_form_ift}
    \pi \mapsto \frac{1}{2}\, \pi\,\pder{\phi^2}{^2\Li}(\phist, \theta)\,\pi^\top - \pi \pder{\phi}{\Lo}(\phist,\theta)^\top.
\end{equation}
As we are using row vectors, the quantity $\pi \partial_\phi \Lo(\phist, \theta)^\top$ corresponds to a dot product. If $\phist$ is a non-flat local minimizer of $\Li$ then the invertible Hessian condition needed in Theorem \ref{thm:ift_existence} is satisfied and the quadratic form (\ref{eqn:quadratic_form_ift}) is positive definite so it has a unique minimizer, which is $\pi^*$.

\paragraph{Choice of the optimizer.}

Naively minimizing the quadratic form (\ref{eqn:quadratic_form_ift}) does not yet lead to a practical algorithm. Let us take the example of gradient descent. An update would take the form
\begin{equation}
    \label{eqn:rbp}
    \pi \leftarrow \pi - \alpha \left ( \pi\,\pder{\phi^2}{^2\Li}(\phist, \theta) - \pder{\phi}{\Lo}(\phist,\theta)\right )
\end{equation}
with $\alpha$ the learning rate. Evaluating \eqref{eqn:rbp} appears to
require computing the Hessian $\partial_\phi^2 \Li$, and then multiplying it with the vector $\pi$, an operation with quadratic complexity which would render the method impractical. However, there is a way of obtaining the update above without ever having to explicitly calculate the Hessian: a cleverer implementation exploits the fact that all we need is a Hessian-vector product. Remarkably, computing such products has the same complexity as computing gradients \citep{pearlmutter_fast_1994}. Gradient descent, and in fact many other optimization procedures, can therefore be executed efficiently. We call this process the second phase, whereas the first phase consists in computing $\phist$.

Implicit differentiation methods take different forms depending on the choice of the optimizer. When gradient descent is chosen as in (\ref{eqn:rbp}), this leads to recurrent backpropagation, also known as the Almeida-Pineda algorithm \citep{almeida_learning_1990, pineda_generalization_1987}\footnote{The usual  way of deriving recurrent backpropagation is by using iterative updates to find the solution of the linear system $\pi\pder{\phi^2}{^2\Li}(\hat{\phi},\theta) = \pder{\phi}{\Lo}(\hat{\phi}, \theta)$. Although this is equivalent to gradient descent on the quadratic form when applied to bilevel optimization, this view allows considering the more general case in which equilibrium states are not necessarily minimizers of a loss function. Here, we use the quadratic form minimization view as it makes the comparison to other methods easier.}. The very same update of recurrent backpropagation can be obtained from different perspectives. For example, it can be derived starting from the Neumann series formulation of the inverse of a matrix \citep{liao_reviving_2018, lorraine_optimizing_2020}: we have
\begin{equation}
    \label{eqn:neumann_series}
    \begin{split}
        \left(\pder{\phi^2}{^2 \Li}(\phist, \theta)\right)^{-1} &= \alpha \left(\alpha \pder{\phi^2}{^2 \Li}(\phist, \theta)\right)^{-1}\\
        & = \alpha \sum_{i=0}^\infty \left(\Id - \alpha \pder{\phi^2}{^2 \Li}(\phist, \theta)\right)^i
    \end{split}
\end{equation}
whenever the absolute eigenvalues of $(\Id - \alpha \partial_\phi^2 \Li(\phist, \theta))$ are strictly smaller than one (which requires $\alpha$ small enough). We cannot use this formula alone as it still requires computing the Hessian but we can use it to iteratively approximate $\pi^*$ using
\begin{equation}
    \pi \leftarrow \pi \left ( \Id - \alpha \pder{\phi^2}{^2 \Li}(\phist, \theta) \right ) + \alpha \pder{\phi}{\Lo}(\phist,\theta),
\end{equation}
which is exactly the same update as (\ref{eqn:rbp}). This is why we used the same notation for the learning rate and the rescaling parameter even though we introduced those two parameters from different contexts. Alternatively, this update also appears in truncated backpropagation \citep{williams_efficient_1990} when gradient descent on $\Li$ has reached a minimum for several steps \citep{shaban_truncated_2019}. Backpropagating through the last iteration takes exactly the same form as (\ref{eqn:rbp}), but it requires storing the intermediate states in memory as opposed to recurrent backpropagation.

Gradient descent is a very general algorithm. Since we want to minimize a specific class of function, one may ask whether more tailored optimization procedures might be more efficient. This is what the conjugate gradient method provides (we refer to \citet{shewchuk_introduction_1994} for more details on the algorithm), while still only requiring Hessian-vector products.

Note that we can obtain first-order approximations of the outer gradient by limiting the number of steps in the second phase. If we skip the second minimization and approximate the result by $\hat{\pi} = 0$, the corresponding approximate outer gradient will be equal to the direct derivative $\partial_\theta \Lo(\phist, \theta)$. If we perform only one step and take $\alpha=1$, we approximate the Hessian with the identity \citep{luketina_scalable_2016}. The amount of compute attributed to the second phase therefore progressively transforms a first-order approximation toward the true value of the gradient.

\paragraph{Some practical considerations.} In practice, we rarely directly minimize (\ref{eqn:quadratic_form_ift}) as we do not have access to an exact minimizer $\phist$ of the inner loss, but only to an estimate $\hat{\phi}$. Instead, we use the estimated version of the quadratic form (replacing $\phist$ by $\hat{\phi}$) as shown in Algorithm~\ref{alg:comparison}.

In many applications, $\Li$ is an empirical risk, that is the average of some loss evaluated on many different data samples. In this case, it might not be possible to compute Hessian-vector products for all the data at once. To work around this issue we can resort to stochastic updates on the quadratic (taking a random subset of the data for each step), as noted in the lecture notes of \citet{grosse_lecture_2021}.

\paragraph{Robustness to non-optimality \dag.} As mentioned above, the local minimizer $\phist$ is almost always approximated in practice. A natural question to ask is whether the methods introduced above are robust to this approximation. In other words, we may ask how good $\eoutg$ is compared to $\outg$, with
\begin{equation}
    \eoutg^\top = \pder{\theta}{\Lo}(\hat{\phi}, \theta) - \hat{\pi} \pder{\theta\partial\phi}{^2 \Li}(\hat{\phi}, \theta),
\end{equation}
as in Algorithm~\ref{alg:comparison}. In the last equation, $\hat{\pi}$ is obtained by iteratively minimizing the estimated version of the quadratic form (\ref{eqn:quadratic_form_ift}), that consists in replacing $\phist$ by $\hat{\phi}$. Its estimation will therefore be the other source of approximation.

We are doing approximate gradient descent at the outer level, which will result in approximate solutions to the bilevel optimization problem. \citet{daspremont_smooth_2008} and \citet{friedlander_hybrid_2012} have shown that the error made in solving a convex optimization problem with inexact gradients can be linked to the gradient approximation error $\snorm{\eoutg - \outg}$. Motivated by those results we present a theoretical bound on the error made in estimating the outer gradient $\outg $ with $\eoutg$ depending on the quality of $\hat{\phi}$ and $\hat{\pi}$. 

\begin{restatable}{assumption}{ibd_analysis}
    \label{ass:ibd_analysis}
    Suppose that there exists positive real numbers $(\mu, \rho, B, L, M)$ such that:
    \begin{enumerate}
        \item[i.] $\Li$ is twice continuously differentiable and $\Lo$ is continuously differentiable.
        \item[ii.] $\Li$ is $\mu$-strongly convex as a function of $\phi$.
        \item[iii.] The second-order derivatives (Hessian and cross derivatives) of $\Li$ are $\rho$-Lipschitz as functions of $\phi$.
        \item[iv.]  As functions of $\phi$, $\Lo$ is $B$-Lipschitz, $L$-smooth and $\spder{\theta}\Lo$ is $M$-Lipschitz,
    \end{enumerate}
\end{restatable}

\begin{restatable}[Error bound for implicit differentiation methods \citep{pedregosa_hyperparameter_2016}]{theorem}{idb_analysis}
    \label{thm:idb_analysis}
    Let $\phist$ be a minimizer of $\Li$ and $\hat{\phi}$ be its approximated value. Let $\delta \in \, ]0, \frac{\mu}{2\rho}[$ be an upper bound on the corresponding approximation error:
    \begin{equation*}
        \snorm{\phist-\hat{\phi}} \leq \delta.
    \end{equation*}
    Let $\hat{\pi}$ be an approximation of $\pi^*$ computed by one of the implicit differentiation methods and $\delta' > 0$ be an upper bound of its approximation error:
    \begin{equation*}
        \snorm{\hat{\pi} - \spder{\phi}{\Lo}(\hat{\phi}, \theta)\,\spder{\phi}{^2\Li}(\hat{\phi},\theta) ^{-1}} \leq \delta'.
    \end{equation*}
    Then, under Assumption~\ref{ass:ibd_analysis}, there exists a constant $C$ such that
    \begin{equation*}
        \snorm{\nabla_\theta - \widehat{\nabla}_\theta} \leq C(\delta+\delta').
    \end{equation*}
\end{restatable}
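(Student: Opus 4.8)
The plan is to decompose $\outg^\top - \eoutg^\top$ into a handful of perturbation terms and bound each using the Lipschitz and smoothness hypotheses of Assumption~\ref{ass:ibd_analysis}. Write $\pi^* = \pder{\phi}{\Lo}(\phist,\theta)\,(\partial_\phi^2\Li(\phist,\theta))^{-1}$ for the exact quantity and $\hat{\pi}^* = \pder{\phi}{\Lo}(\hat{\phi},\theta)\,(\partial_\phi^2\Li(\hat{\phi},\theta))^{-1}$ for the exact minimizer of the \emph{estimated} quadratic form, so that $\snorm{\hat{\pi} - \hat{\pi}^*} \leq \delta'$ by hypothesis. Subtracting the two gradient formulas,
\[
    \outg^\top - \eoutg^\top = \Big(\pder{\theta}{\Lo}(\phist,\theta) - \pder{\theta}{\Lo}(\hat{\phi},\theta)\Big) - \Big(\pi^*\,\partial^2_{\theta\partial\phi}\Li(\phist,\theta) - \hat{\pi}\,\partial^2_{\theta\partial\phi}\Li(\hat{\phi},\theta)\Big),
\]
and I split the second parenthesis telescopically as $(\pi^* - \hat{\pi})\,\partial^2_{\theta\partial\phi}\Li(\phist,\theta) + \hat{\pi}\big(\partial^2_{\theta\partial\phi}\Li(\phist,\theta) - \partial^2_{\theta\partial\phi}\Li(\hat{\phi},\theta)\big)$.

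First I would collect the preliminary estimates that make the bookkeeping go through. Since $\Li(\cdot,\theta)$ is $\mu$-strongly convex and its Hessian is $\rho$-Lipschitz in $\phi$, for $\snorm{\phist - \hat{\phi}} \leq \delta < \mu/(2\rho)$ one has $\partial_\phi^2\Li(\hat{\phi},\theta) \succeq (\mu - \rho\delta)\,\Id \succeq \tfrac{\mu}{2}\,\Id$, so this Hessian is invertible with $\snorm{(\partial_\phi^2\Li(\hat{\phi},\theta))^{-1}} \leq 2/\mu$ (and $\snorm{(\partial_\phi^2\Li(\phist,\theta))^{-1}} \leq 1/\mu$); this is where the upper bound on $\delta$ first enters, as it is what guarantees $\hat{\pi}^*$ is even well-defined. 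Using $\snorm{\pder{\phi}{\Lo}} \leq B$ gives $\snorm{\pi^*} \leq B/\mu$, and the resolvent identity $A^{-1} - \tilde A^{-1} = A^{-1}(\tilde A - A)\tilde A^{-1}$ together with $\snorm{\partial_\phi^2\Li(\phist,\theta) - \partial_\phi^2\Li(\hat{\phi},\theta)} \leq \rho\delta$ and the $L$-smoothness of $\Lo$ yields $\snorm{\pi^* - \hat{\pi}^*} \leq \tfrac{L}{\mu}\delta + \tfrac{2B\rho}{\mu^2}\delta =: C_1\delta$. Adding $\snorm{\hat{\pi} - \hat{\pi}^*} \leq \delta'$ gives $\snorm{\pi^* - \hat{\pi}} \leq C_1\delta + \delta'$ and hence $\snorm{\hat{\pi}} \leq B/\mu + C_1\delta + \delta'$, which stays bounded because $\delta$ is.

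Then I bound the three pieces. The first parenthesis of the decomposition is $\leq M\delta$ by $M$-Lipschitzness of $\pder{\theta}{\Lo}$; the term $(\pi^* - \hat{\pi})\,\partial^2_{\theta\partial\phi}\Li(\phist,\theta)$ is $\leq \snorm{\partial^2_{\theta\partial\phi}\Li(\phist,\theta)}\,(C_1\delta + \delta')$, where $\snorm{\partial^2_{\theta\partial\phi}\Li(\phist,\theta)}$ is a finite constant by twice continuous differentiability of $\Li$ at the fixed point $(\phist,\theta)$; and $\hat{\pi}\big(\partial^2_{\theta\partial\phi}\Li(\phist,\theta) - \partial^2_{\theta\partial\phi}\Li(\hat{\phi},\theta)\big)$ is $\leq \snorm{\hat{\pi}}\,\rho\delta \leq (B/\mu + C_1\delta + \delta')\rho\delta$. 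Invoking $\delta < \mu/(2\rho)$ once more to absorb the residual $\delta^2$ and $\delta\delta'$ contributions into multiples of $\delta$ and $\delta'$, each piece is at most $C(\delta + \delta')$ for a constant $C$ depending only on $\mu, \rho, B, L, M$ and the fixed number $\snorm{\partial^2_{\theta\partial\phi}\Li(\phist,\theta)}$; summing the three contributions closes the proof.

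The main obstacle is not any single estimate — the whole argument is routine perturbation accounting — but rather ensuring that the \emph{estimated} inverse Hessian $(\partial_\phi^2\Li(\hat{\phi},\theta))^{-1}$ exists and is uniformly bounded, which is exactly what the restriction $\delta \in \,]0,\mu/(2\rho)[$ buys us; without it $\hat{\pi}^*$ (and therefore $\hat{\pi}$, and the very notion of a small error $\delta'$) loses meaning. A lesser subtlety is that $\snorm{\hat{\pi}}$ itself depends on the approximation quality, so one must bound $\snorm{\pi^* - \hat{\pi}}$ first and only then deduce a bound on $\snorm{\hat{\pi}}$, to keep it from appearing circularly on both sides of the inequality.
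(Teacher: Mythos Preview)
Your proposal is correct and follows essentially the same perturbation-analysis approach as the paper's proof: a three-term telescopic decomposition of $\outg - \eoutg$, with each piece bounded via the Lipschitz and smoothness hypotheses, and the key inverse-Hessian perturbation handled by what you call the resolvent identity (the paper packages the same estimate as a cited lemma on perturbed linear systems, Lemma~\ref{lem:perturbed_linearsys}). The only cosmetic difference is that the paper telescopes in the reverse order, pairing $\pi^*$ with the cross-derivative difference and $\hat{\chi}$ with $\pi^*-\hat{\pi}$, which sidesteps the $\delta\delta'$ cross-term you absorb at the end.
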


The quantities $\delta$ and $\delta'$ measure the error made in the two phases of the algorithms, where the first phase consists in finding a minimum of $\Li$ and the second one in minimizing the local quadratic form. Theorem \ref{thm:idb_analysis} shows that the approximation error in the outer gradient grows linearly with those two errors. Assumption \ref{ass:ibd_analysis} ensures that the problem we are considering and its derivatives are well defined (\textit{i.} and \textit{ii.}) and that $\Li$ and $\Lo$ are regular enough (\textit{iii.} and \textit{iv.}).

We present a proof of Theorem~\ref{thm:idb_analysis} along with a discussion on how to transform the global convexity assumptions into local ones in Appendix~\ref{app:ift_extension_local}.

\subsection{Equilibrium propagation}

\label{sec:epb_methods}

Instead of differentiating through the implicit functions, we can resort to another mathematical result known as equilibrium propagation \citep{scellier_equilibrium_2017}, which reformulates the outer gradient in a way that is easier to estimate numerically. While equilibrium propagation was originally presented in the context of energy-based recurrent neural network learning, the result is far more general. As we discuss next, equilibrium propagation can be applied to solve general bilevel optimization problems.

\paragraph{Equilibrium propagation theorem.} The first step in equilibrium propagation consists in breaking up the hierarchy of losses and mixing $\Li$ and $\Lo$ in an augmented loss 
\begin{equation}
    \Lt(\phi, \theta, \beta) := \Li(\phi, \theta) + \beta \Lo(\phi, \theta).
\end{equation}
The nudging strength $\beta$ is a scalar that controls the strength of the mix; when it is equal to 0, we retrieve the inner learning problem. We denote by $\phisbt \in \argmin_\phi \Lt(\phi, \theta, \beta)$ the different minimizers of $\Lt$. We can now introduce the equilibrium propagation result.

\begin{theorem}[Equilibrium propagation \citep{scellier_equilibrium_2017, scellier_deep_2021}]
    \label{thm:equilibrium_propagation}
    Let $\Li$ and $\Lo$ be two twice continuously differentiable functions. Let $\bar{\phi}$ be a stationary point of $\Lt(\,\cdot\,, \bar{\theta}, \bar{\beta})$, i.e.,
    \begin{equation*}
        \pder{\phi}{\Lt}\left(\bar{\phi}, \bar{\theta}, \bar{\beta}\right) = 0,
    \end{equation*}
    such that $\partial_\phi^2 \Lt(\bar{\phi}, \bar{\theta}, \bar{\beta})$ is invertible. Then, there exists a neighborhood of $(\bar{\theta}, \bar{\beta})$ and a continuously differentiable function $(\theta, \beta) \mapsto \phisbt$ such that $\phi^*_{\bar{\theta}, \bar{\beta}} = \bar{\phi}$ and for every $(\theta, \beta)$ in this neighborhood we have
    \begin{equation*}
        \pder{\phi}{\Lt}\left (\phisbt, \theta, \beta \right ) = 0
    \end{equation*}
    and
    \begin{equation*}
        \label{eqn:ep_formula}
        \frac{\mathrm{d}}{\mathrm{d}\theta}\pder{\beta}{\Lt}\left(\phisbt, \theta, \beta \right) =  \frac{\mathrm{d}}{\mathrm{d}\beta}\frac{\partial \Lt}{\partial \theta}\left(\phisbt, \theta,\beta \right)^\top\!.
    \end{equation*}
\end{theorem}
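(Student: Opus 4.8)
The plan is to derive both halves of the statement from results already in hand: the existence of $\phisbt$ and the stationarity condition come straight from the implicit function theorem applied to $\partial_\phi \Lt$, while the mixed-derivative identity is an envelope-theorem computation closed off by the symmetry of second derivatives (Schwarz's theorem).

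First I would apply the implicit function theorem with $f := \partial_\phi \Lt$, regarded as a map of $\phi$ and of the pair $(\theta, \beta)$ together; it is continuously differentiable because $\Lt$ is twice continuously differentiable. By hypothesis $f(\bar\phi, \bar\theta, \bar\beta) = 0$ and its $\phi$-Jacobian at that point, $\partial_\phi^2 \Lt(\bar\phi, \bar\theta, \bar\beta)$, is invertible, so Theorems~\ref{thm:ift_existence} and~\ref{thm:ift_differentiation} supply a neighborhood of $(\bar\theta, \bar\beta)$ and a continuously differentiable map $(\theta, \beta) \mapsto \phisbt$ with $\phi^*_{\bar\theta, \bar\beta} = \bar\phi$ and $\partial_\phi \Lt(\phisbt, \theta, \beta) = 0$ throughout that neighborhood (continuity of the derivative following from the formula in Theorem~\ref{thm:ift_differentiation} together with $f \in C^1$). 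This settles the first two displayed equations of Theorem~\ref{thm:equilibrium_propagation}.

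For the identity I would introduce the reduced scalar function $F(\theta, \beta) := \Lt(\phisbt, \theta, \beta)$ and run the envelope argument: differentiating the composition by the chain rule, the contributions of $\der{\theta}{\phisbt}$ and $\der{\beta}{\phisbt}$ appear premultiplied by $\partial_\phi \Lt(\phisbt, \theta, \beta) = 0$ and therefore drop out, leaving
\begin{equation*}
    \pder{\beta}{F}(\theta, \beta) = \pder{\beta}{\Lt}(\phisbt, \theta, \beta), \qquad \pder{\theta}{F}(\theta, \beta) = \pder{\theta}{\Lt}(\phisbt, \theta, \beta).
\end{equation*}
The crucial regularity observation is that these formulas exhibit the first partials of $F$ as compositions of the continuously differentiable maps $\partial_\beta \Lt$ and $\partial_\theta \Lt$ (continuously differentiable because $\Lt \in C^2$) with the continuously differentiable map $(\theta, \beta) \mapsto (\phisbt, \theta, \beta)$; hence $\partial_\beta F$ and $\partial_\theta F$ are themselves continuously differentiable, so all second partial derivatives of $F$ exist and are continuous, i.e. $F \in C^2$. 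Schwarz's theorem then gives equality of the mixed second partials of $F$, and substituting the two envelope formulas (noting that once $\phisbt$ is plugged in the derivatives become \emph{total} derivatives in $(\theta, \beta)$) yields precisely
\begin{equation*}
    \der{\theta}{} \pder{\beta}{\Lt}(\phisbt, \theta, \beta) = \der{\beta}{} \pder{\theta}{\Lt}(\phisbt, \theta, \beta)^\top,
\end{equation*}
the transpose being only the bookkeeping needed to reconcile the paper's row/column conventions ($\beta$ scalar, $\theta$ a vector).

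I expect the one genuine subtlety to be this regularity accounting: since $\Lt$ is only assumed $C^2$, the implicit function $\phisbt$ is only known to be $C^1$ and may fail to be twice differentiable, so one must not differentiate it twice. The point is exactly that the envelope identity eliminates every second-order occurrence of $\phisbt$, so $F$ still comes out $C^2$ and Schwarz's theorem applies. As an alternative route that avoids naming $F$, one can differentiate the two sides of the claimed identity head on: expand $\der{\theta}{} \pder{\beta}{\Lt}(\phisbt, \theta, \beta)$ and $\der{\beta}{} \pder{\theta}{\Lt}(\phisbt, \theta, \beta)$ by the chain rule, insert $\der{\beta}{\phisbt} = -(\partial_\phi^2 \Lt)^{-1}\, \partial_\beta \partial_\phi \Lt$ and its $\theta$-counterpart from Theorem~\ref{thm:ift_differentiation}, and check that the two resulting expressions coincide using the symmetry of $\partial_\phi^2 \Lt$ and of the mixed partials of $\Lt$; this is slightly more computational but again needs only $\Lt \in C^2$.
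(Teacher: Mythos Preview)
Your proposal is correct and matches the paper's own proof: both obtain existence from the implicit function theorem applied to $f=\partial_\phi\Lt$, and both derive the identity by applying Schwarz's theorem to the scalar function $(\theta,\beta)\mapsto\Lt(\phisbt,\theta,\beta)$ together with the envelope simplification $\partial_\phi\Lt(\phisbt,\theta,\beta)=0$. The only difference is cosmetic ordering---the paper invokes Schwarz first and then simplifies each side via the chain rule, whereas you simplify first and then invoke Schwarz---and you are somewhat more explicit than the paper about why the reduced function is $C^2$ despite $\phisbt$ being only $C^1$.
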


\begin{proof}
    The existence part in the equilibrium propagation theorem directly follows from Theorem \ref{thm:ift_existence} using $f = \partial_\phi \Lt$. Obtaining the differentiation formula is not as complicated as it may appear at first glance. The first step consists of applying the symmetry of second-order derivatives result, also known as Schwarz's theorem:
    \begin{equation*}
        \der{\theta}{}\der{\beta}{} \Lt \left (\phisbt, \theta, \beta \right ) = \der{\beta}{}\der{\theta}{} \Lt \left (\phisbt, \theta, \beta \right )^\top\!.
    \end{equation*}
    We then apply the chain rule on both sides of the previous equation and use the equilibrium condition $\partial_\phi \mathcal{L}(\phisbt, \theta, \beta)=0$ to simplify the derivatives. For the left-hand side of the previous equation, it yields
    \begin{equation*}
        \begin{split}
            \der{\theta}{}\der{\beta}{} \Lt\left(\phisbt, \theta, \beta\right) & = \der{\theta}{} \left [ \pder{\beta}{\Lt}\left(\phisbt, \theta, \beta\right) + \pder{\phi}{\Lt}\left(\phisbt, \theta, \beta\right) \der{\beta}{\phisbt} \right ]\\
            & = \der{\theta}{} \pder{\beta}{\Lt}\left(\phisbt, \theta, \beta\right).
        \end{split}
    \end{equation*}
    The right-hand side can be simplified in the same way, which gives the desired formula.
\end{proof}

The equilibrium propagation result can be used to reformulate the outer gradient $\outg$ by remarking that $\partial_\beta \Lt = \Lo$ and $\left . \phisbt \right |_{\beta=0} = \phist$. We then have
\begin{equation}
    \label{eqn:outg_ep_form}
    \outg = \left . \der{\beta}{} \pder{\theta}{\Lt}\left(\phisbt, \theta, \beta\right) \right |_{\beta=0}.
\end{equation}

Theorem~\ref{thm:equilibrium_propagation} uses a stationary condition on the vector $\phi$ but more general versions of equilibrium propagation exist for stationary distributions or trajectories (see \citet{scellier_deep_2021} for more details). A very similar gradient estimate has also been derived when $\phi$ is a discrete quantity and the inner  and outer losses are expectations measured over a continuous distribution \citep{hazan_direct_2010, song_training_2016}.

There is a deep connection between the equilibrium propagation and implicit differentiation approaches: the quantity $\pi^*$ that we defined in Equation~\ref{eqn:def_pistar} is actually indirectly computed in equilibrium propagation, since $\pi^* = \left . \mathrm{d}_\beta \phisbt \right |_{\beta=0}$. The trajectories in the second phases of equilibrium propagation and implicit differentiation methods can also be shown to be closely related, when gradient descent is used in the second phase for the two methods \citep{scellier_equivalence_2019}.

\paragraph{Numerical estimation of $\outg$.}
The formula provided by the equilibrium propagation theorem might not appear useful at first. Closer inspection, however, reveals that it offers a new way of numerically estimating the outer gradient.  The outer gradient is a derivative of a scalar function with respect to a vector $\theta$, and is thus hard to estimate numerically, in particular when $\theta$ is high-dimensional. By contrast, the right-hand side of (\ref{eqn:outg_ep_form}) is the derivative of a vector-valued function with respect to a scalar, which can be readily estimated with finite difference techniques. The simplest finite difference estimator is:
\begin{equation}
    \label{eqn:eoutg_2ep}
    \eoutg^\top \coloneqq \frac{1}{\beta} \left ( \pder{\theta}{\Lt}(\phihb, \theta, \beta) - \pder{\theta}{\Lt}(\phihz, \theta, 0) \right )\!,
\end{equation}
where $\phihz$ and $\phihb$ are the approximated values of $\phiszt$ and $\phisbt$. This formula yields a two-phase algorithm that is detailed in Algorithm \ref{alg:comparison}. The approximation of the outer gradient can be refined by adding more points to the estimator, for instance by resorting to the central or forward finite difference estimators. The idea is to collect the value of $\partial_\theta \Lt$ at different values, e.g. $-\beta$ and $\beta$ for the central one with three points (as in \citet{laborieux_scaling_2021}) or $0, \beta, 2\beta, \ldots$ for the forward ones. We provide more details to the interested reader in Appendix~\ref{app:details_ep_estimators}.

\paragraph{Robustness to non-optimality \dag.} As for implicit differentation methods, it is possible to bound the error made by the two-point equilibrium propagation estimator (\ref{eqn:eoutg_2ep}). There are two sources of error: the approximation of the two solutions and the one rooted in the finite difference scheme. When $\beta$ gets smaller, the finite difference error gets smaller. On the other side, decreasing $\beta$ increases the sensitivity of the estimation to noise or inaccurate minimizations. Theorem \ref{thm:ep_analysis} quantifies it; we visualize the result on Figure~\ref{fig:viz_ep}.

\begin{restatable}{assumption}{ass_ep_analysis}
    \label{ass:ep_analysis}
    Assume that $\Li$ and $\Lo$ are three-times continuously differentiable. Additionally, suppose that there exists positive real numbers \linebreak $(B^\mathrm{in}, B^\mathrm{out}, L, \mu, \rho ,\sigma)$ such that $\Li$ and $\Lo$, as functions of $\phi$, verify the following properties:
    \begin{itemize}
        \item[i.] $\partial_\theta\Li$ is $B^\mathrm{in}$-Lipschitz and $\partial_\theta\Lo$ is $B^\mathrm{out}$-Lipschitz.
        \item[ii.] $\Li$ and $\Lo$ are $L$-smooth and $\mu$-strongly convex.
        \item[iii.] their Hessians are $\rho$-Lipschitz.
        \item[iv.] $\partial_\phi\partial_\theta\Li$ and $\partial_\phi \partial_\theta\Lo$ are $\sigma$-Lipschitz.
    \end{itemize}
\end{restatable}

\begin{restatable}[Error bound for equilibrium propagation \citep{zucchet_contrastive_2021}]{theorem}{ep_analysis}
    \label{thm:ep_analysis}
    Let $\beta > 0$ and $(\delta, \delta')$ be such that
    \begin{equation*}
        \snorm{\phihz-\phiszt} \leq \delta
    \end{equation*}
    and
    \begin{equation*}
        \snorm{\phihb-\phisbt} \leq \delta'.
    \end{equation*}
    Then, under Assumption~\ref{ass:ep_analysis}, there exists a $\theta$-dependent constant $C$ such that
    \begin{equation*}
        \snorm{\eoutg - \outg} \leq \frac{B^\mathrm{in}(\delta+\delta')}{\beta} + B^\mathrm{out}\delta' + C \frac{\beta}{1+\beta} \eqqcolon \mathcal{B}(\delta, \delta', \beta).
    \end{equation*}
\end{restatable}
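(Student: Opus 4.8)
The plan is to insert the ``exact'' two-point estimator and split the error with the triangle inequality. Define $V^\top \coloneqq \frac{1}{\beta}\big(\partial_\theta\Lt(\phisbt,\theta,\beta)-\partial_\theta\Lt(\phiszt,\theta,0)\big)$, the quantity obtained from $\eoutg$ by replacing the approximate minimizers $\phihz,\phihb$ with the true ones $\phiszt=\phist$ and $\phisbt$, so that $\snorm{\eoutg-\outg}\le\snorm{\eoutg-V}+\snorm{V-\outg}$. For the first term I would only use the decomposition $\partial_\theta\Lt(\cdot,\theta,\beta)=\partial_\theta\Li(\cdot,\theta)+\beta\,\partial_\theta\Lo(\cdot,\theta)$ together with Assumption~\ref{ass:ep_analysis}(i): since $\partial_\theta\Li$ is $B^\mathrm{in}$-Lipschitz and $\partial_\theta\Lo$ is $B^\mathrm{out}$-Lipschitz in $\phi$, and $\snorm{\phihb-\phisbt}\le\delta'$, $\snorm{\phihz-\phist}\le\delta$, a direct estimate yields $\snorm{\eoutg-V}\le\frac{B^\mathrm{in}(\delta+\delta')}{\beta}+B^\mathrm{out}\delta'$, which is exactly the first two terms of $\mathcal{B}$ and the only place $\delta,\delta'$ enter.

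Next, I would control the intrinsic finite-difference error $\snorm{V-\outg}$ by introducing the path $g(t)\coloneqq\partial_\theta\Lt(\phi^*_{\theta,t},\theta,t)^\top$ for $t\ge 0$. Since $\Lt(\cdot,\theta,t)=\Li+t\Lo$ is $(1+t)\mu$-strongly convex by Assumption~\ref{ass:ep_analysis}(ii), it has a unique minimizer $\phi^*_{\theta,t}$, which the implicit function theorem (applied to $\partial_\phi\Lt=0$, the local branches being glued by uniqueness) shows is $C^1$ in $t$; hence $g\in C^1$, $V=\frac{g(\beta)-g(0)}{\beta}$, and $g'(0)=\outg$ by \eqref{eqn:outg_ep_form}, so that $V-\outg=\frac{1}{\beta}\int_0^\beta\big(g'(t)-g'(0)\big)\,\mathrm{d}t$. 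The whole term then reduces to the estimate $\snorm{g'(t)-g'(0)}\le C'\,\frac{t}{1+t}$ for a $\theta$-dependent constant $C'$, which I would establish in three steps. \emph{(i)} Comparing the stationarity conditions of $\phi^*_{\theta,t}$ and $\phist$ and invoking strong convexity gives the a priori bound $\snorm{\phi^*_{\theta,t}-\phist}\le\frac{t}{1+t}\cdot\frac{1}{\mu}\snorm{\partial_\phi\Lo(\phist,\theta)}$; in particular the path stays in a fixed compact ball around $\phist$, so all derivatives of $\Li$ and $\Lo$ up to order three are uniformly bounded along it. \emph{(ii)} Differentiating $\partial_\phi\Lt(\phi^*_{\theta,t},\theta,t)=0$ gives $\der{t}{\phi^*_{\theta,t}}=-\big(\partial_\phi^2\Lt\big)^{-1}\partial_\phi\Lo^\top$ and, by the chain rule, $g'(t)=-M(t)H(t)^{-1}G(t)+\partial_\theta\Lo(\phi^*_{\theta,t},\theta)^\top$ with $M(t)=\partial_\phi\partial_\theta\Lt$, $H(t)=\partial_\phi^2\Lt$, $G(t)=\partial_\phi\Lo^\top$ evaluated along the path; expanding $g'(t)-g'(0)$ as a telescoping sum of the differences of these three factors and bounding with $\snorm{H(t)^{-1}}\le\frac{1}{(1+t)\mu}$, step \emph{(i)}, and the regularity constants $\rho,\sigma,L,B^\mathrm{out}$ of Assumption~\ref{ass:ep_analysis}, every contribution turns out to be $O\!\big(\frac{t}{1+t}\big)$: the terms $t\,\partial_\phi\partial_\theta\Lo$ and $t\,\partial_\phi^2\Lo$ that appear linearly in $M$ and $H$ are each multiplied by the inverse-Hessian factor that decays like $\frac{1}{1+t}$, while all remaining variations are already $O(\snorm{\phi^*_{\theta,t}-\phist})=O(\frac{t}{1+t})$. \emph{(iii)} Integrating, $\snorm{V-\outg}\le\frac{C'}{\beta}\int_0^\beta\frac{t}{1+t}\,\mathrm{d}t=C'\big(1-\tfrac{\ln(1+\beta)}{\beta}\big)\le C'\,\frac{\beta}{1+\beta}$, using $\ln(1+\beta)\ge\frac{\beta}{1+\beta}$. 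Taking $C\coloneqq C'$ and adding the two pieces gives $\snorm{\eoutg-\outg}\le\mathcal{B}(\delta,\delta',\beta)$.

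The main obstacle is step \emph{(ii)}: one must organize the expansion of $g'(t)-g'(0)$ so that the contributions involving the outer-loss curvature $\partial_\phi^2\Lo$ and cross-derivative $\partial_\phi\partial_\theta\Lo$---which are amplified by the nudging factor $t$ baked into $\Lt$---are seen to be exactly compensated by the $\frac{1}{1+t}$ decay of $(\partial_\phi^2\Lt)^{-1}$. Without this observation one would only obtain $\snorm{g'(t)-g'(0)}=O(t)$ and hence a discretization error $O(\beta)$ that is not uniformly bounded; it is precisely here that strong convexity of the \emph{outer} loss $\Lo$, and not merely of $\Li$, is used, since that is what upgrades the inverse-Hessian bound from $\frac{1}{\mu}$ to $\frac{1}{(1+t)\mu}$.
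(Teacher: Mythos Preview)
Your proposal is correct, and the overall architecture --- inserting the exact two-point estimator and splitting via the triangle inequality --- is identical to the paper's. The treatment of the first term (the fixed-point approximation error, yielding $\frac{B^\mathrm{in}(\delta+\delta')}{\beta}+B^\mathrm{out}\delta'$) is the same argument verbatim.

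The finite-difference term is handled by a genuinely different, somewhat more elementary route. The paper applies Taylor's theorem to $g(t)=\partial_\theta\Lt(\phi^*_{\theta,t},\theta,t)$ at \emph{second} order, writing $V-\outg=\frac{1}{\beta}\int_0^\beta(\beta-t)\,g''(t)\,\mathrm{d}t$, and then proves through three lemmas that $\snorm{g''(t)}\lesssim(1+t)^{-2}$; this requires bounding both $\mathrm{d}_\beta\phi^*_{\theta,\beta}$ and $\mathrm{d}^2_\beta\phi^*_{\theta,\beta}$. You instead stop at first order, $V-\outg=\frac{1}{\beta}\int_0^\beta(g'(t)-g'(0))\,\mathrm{d}t$, and bound $\snorm{g'(t)-g'(0)}\lesssim\frac{t}{1+t}$ directly from the implicit-differentiation formula $g'(t)=-M(t)H(t)^{-1}G(t)+\partial_\theta\Lo$ by telescoping and using $\snorm{H(t)^{-1}}\le\frac{1}{(1+t)\mu}$ together with the a~priori displacement bound $\snorm{\phi^*_{\theta,t}-\phist}\le\frac{t}{(1+t)\mu}\snorm{\partial_\phi\Lo(\phist,\theta)}$. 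The two routes lead to the very same integral $\frac{1}{\beta}\int_0^\beta\frac{\beta-t}{(1+t)^2}\,\mathrm{d}t=\frac{1}{\beta}\int_0^\beta\frac{t}{1+t}\,\mathrm{d}t=1-\frac{\ln(1+\beta)}{\beta}$ and the same final estimate via $\ln(1+\beta)\ge\frac{\beta}{1+\beta}$. What your approach buys is that it avoids computing or bounding the second derivative $\mathrm{d}^2_\beta\phi^*_{\theta,\beta}$ altogether (the paper's Lemma on $\snorm{\mathrm{d}^2_\beta\phi^*_\beta}$ is not needed), at the cost of a slightly more delicate telescoping in which one must track, as you correctly note, that the $O(t)$ contributions from $t\,\partial_\phi\partial_\theta\Lo$ and $t\,\partial_\phi^2\Lo$ are each paired with a factor $\snorm{H(t)^{-1}}\le\frac{1}{(1+t)\mu}$. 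The paper's second-order expansion makes this compensation automatic but pays for it with an extra layer of lemmas.
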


\begin{figure}[ht]
    \centering
    \begin{subfigure}{200pt}
        \caption{}
        \includegraphics{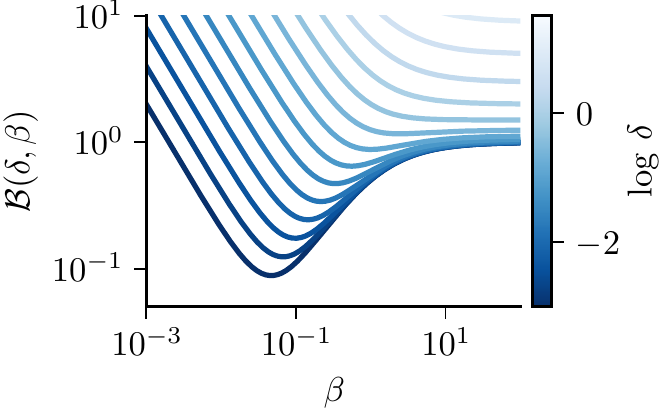}
    \end{subfigure}
    \begin{subfigure}{200pt}
        \caption{}
        \includegraphics{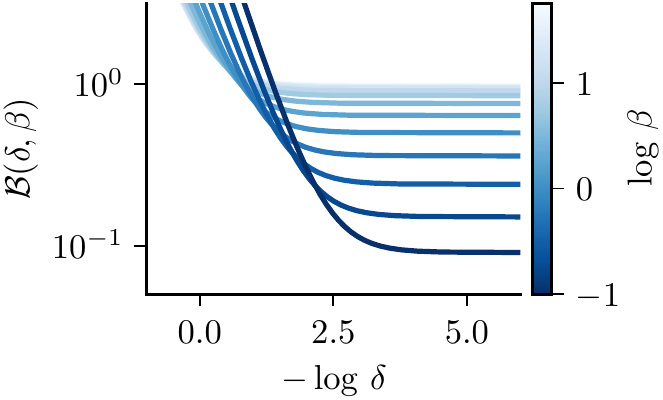}
    \end{subfigure}
    \caption{Visualization of the bound $\mathcal{B}$ on the gradient estimation error obtained in Theorem \ref{thm:ep_analysis}, as a function of $\beta$ (A) and as a function of $\delta=\delta'$ (B) (adapted from \citet{zucchet_contrastive_2021}).}
    \label{fig:viz_ep}
\end{figure}

As for implicit differentiation methods, we can obtain a more local version of Theorem~\ref{thm:ep_analysis} by replacing the strong convexity assumption of $\Li$ by a non-flat minimum assumption. The behavior of the estimator when the Hessian of $\Li$ at $\hat{\phi}$ is not positive definite is, however, quite different from the other kind of methods. Assuming that $\Li$ is bounded from below, the second phase ends up in a nearby basin of attraction in the worse case. The gradient estimator will then converge to some finite value, as opposed to implicit differentiation methods that will diverge.

\paragraph{Comparison with implicit differentiation methods \dag.} It is not yet possible to compare the bounds from Theorem~\ref{thm:idb_analysis} and Theorem \ref{thm:ep_analysis} as the bound for equilibrium propagation is still $\beta$-dependent. We can remove this dependency through the following corollary.

\begin{restatable}[Corollary of Theorem \ref{ass:ep_analysis} \citep{zucchet_contrastive_2021}]{corollary}{ep_analysis_opt}
    \label{cor:ep_analysis_opt}
    Under Assumption~\ref{ass:ep_analysis}, if we additionally suppose that for every $\beta > 0$ we approximate the two solutions with precision $\delta$ and $\delta'$ and if $(\delta+\delta')< C/B^\mathrm{in}$, the best achievable bound in Theorem~\ref{thm:ep_analysis} is smaller than
    \begin{equation*}
        B^\mathrm{out}\delta'+2\sqrt{CB^\mathrm{in}(\delta+\delta')}.
    \end{equation*}
\end{restatable}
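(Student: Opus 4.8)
The plan is to recognize that the entire statement is a one-variable optimization: the extra hypothesis ``for every $\beta > 0$ we approximate the two solutions with precision $\delta$ and $\delta'$'' says precisely that $\delta$ and $\delta'$ do not depend on $\beta$, and the constant $C$ in Theorem~\ref{thm:ep_analysis} is $\theta$-dependent but $\beta$-independent, so the ``best achievable bound'' is simply $\min_{\beta > 0} \mathcal{B}(\delta, \delta', \beta)$. I would abbreviate $a \coloneqq B^\mathrm{in}(\delta+\delta')$, note that $B^\mathrm{out}\delta'$ does not depend on $\beta$, and reduce the problem to minimizing
\[
g(\beta) \coloneqq \frac{a}{\beta} + C\,\frac{\beta}{1+\beta}
\]
over $\beta > 0$.

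First I would differentiate: $g'(\beta) = -a/\beta^2 + C/(1+\beta)^2$, so $g'(\beta) = 0$ is equivalent to $C\beta^2 = a(1+\beta)^2$, and taking positive square roots to $\sqrt{C}\,\beta = \sqrt{a}\,(1+\beta)$, i.e. $\beta^\star = \sqrt{a}/(\sqrt{C}-\sqrt{a})$. This is exactly where the hypothesis $(\delta+\delta') < C/B^\mathrm{in}$ is used: it is equivalent to $a < C$, which is the condition under which $\beta^\star$ is a well-defined positive number. A short sign analysis of $g'$ — or the observations that $g(\beta) \to \infty$ as $\beta \to 0^+$, that $g(\beta) \to C$ as $\beta \to \infty$, and that $\beta^\star$ is the unique critical point — confirms that $\beta^\star$ is the global minimizer on $(0,\infty)$ when $a < C$.

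Next I would evaluate $g$ at $\beta^\star$. Since $1 + \beta^\star = \sqrt{C}/(\sqrt{C}-\sqrt{a})$, one gets $\beta^\star/(1+\beta^\star) = \sqrt{a}/\sqrt{C}$ and $a/\beta^\star = \sqrt{a}\,(\sqrt{C}-\sqrt{a}) = \sqrt{aC} - a$, hence $g(\beta^\star) = 2\sqrt{aC} - a$. Substituting back $a = B^\mathrm{in}(\delta+\delta')$, the best achievable bound equals
\[
B^\mathrm{out}\delta' + 2\sqrt{C\,B^\mathrm{in}(\delta+\delta')} - B^\mathrm{in}(\delta+\delta'),
\]
which is strictly smaller than $B^\mathrm{out}\delta' + 2\sqrt{C\,B^\mathrm{in}(\delta+\delta')}$ because $B^\mathrm{in}(\delta+\delta') > 0$. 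This is exactly the claimed inequality.

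There is no genuine obstacle here; the only point deserving care is the legitimacy of optimizing over $\beta$, namely that $C$, $\delta$ and $\delta'$ are all $\beta$-independent so that $\beta^\star$ may be chosen after the fact — I would state this explicitly, since if the attainable precision degraded as $\beta \to 0$ the trade-off would change. As a remark I would mention the cruder route: bounding $\beta/(1+\beta) \le \beta$ and minimizing $a/\beta + C\beta$ gives the same order of magnitude ($2\sqrt{aC}$ at $\beta = \sqrt{a/C}$) and even the strict inequality, but the exact computation above is sharper (it gains the $-a$ term) and makes the role of the hypothesis $a < C$ transparent.
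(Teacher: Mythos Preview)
Your proof is correct and follows essentially the same approach as the paper: differentiate $\mathcal{B}$ with respect to $\beta$, solve for the unique positive critical point $\beta^\star = \sqrt{a}/(\sqrt{C}-\sqrt{a})$ using the hypothesis $a < C$, evaluate $\mathcal{B}$ there to get $B^\mathrm{out}\delta' + 2\sqrt{aC} - a$, and drop the $-a$ term. Your added remarks on verifying that $\beta^\star$ is a global minimizer and on the cruder bound $\beta/(1+\beta)\le\beta$ are helpful but not in the paper's proof.
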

The error made in the two-point equilibrium propagation estimator is therefore $O(\sqrt{\delta+\delta'})$, which implies that implicit differentiation methods are theoretically less sensitive to approximations in the two phases than equilibrium propagation. We compare in more details the two approaches in the next section.


\section{Comparison of the different approaches}

Having introduced implicit methods for bilevel optimization, the questions that come next are in which conditions they are useful, and which one to pick. The objective of this section is to help the give the reader insight into where the different methods shine, but not to give a definitive answer to such questions.

\subsection{Alternative methods} 

Backpropagation through time \citep{werbos_backpropagation_1990} can be used to compute gradients when the process used to estimate $\phist$ is a sequence of differentiable operations. In most settings, it is impossible to store the entire sequence of intermediate parameters produced by the algorithm in memory. The standard workaround to this problem is to run (truncate) the backward pass for a limited number of steps \citep{jaeger_tutorial_2002, shaban_truncated_2019} or to use a checkpointing strategy \citep{gruslys_memory-efficient_2016}. Whenever backpropagation or its truncated version is applicable, it is often a strong alternative to the implicit methods studied here; it is difficult to rule out a priori one class of methods over the other without experimenting with both.

There is, however, a number of clearly identifiable scenarios in which the methods discussed in this article may be preferable. Perhaps most importantly, it is not always possible to write the underlying optimization algorithm as a differentiable program. For example, an algorithmic solver can provably minimize a smooth loss function, but its inner process is not necessarily differentiable. In such cases, automatic differentiation is not an option, and implicit methods are in general the only gradient-based methods available. Furthermore, even when the learning algorithm is technically differentiable, it may generate chaotic sequences of parameters, which render gradients extremely noisy \citep{metz_understanding_2019}. In such situations, the methods studied here may lead to an implicit form of regularization of the learning process, by selecting outer parameters that are less prone to inducing chaos. More work is needed to investigate this hypothesis.

There is growing interest in physically-plausible learning algorithms, where optimization is performed by a physical system evolving in time \citep{millar_general_1951,kendall_training_2020, stern_supervised_2021, scellier_agnostic_2022}. It is generally impossible to implement backpropagation in such systems, as this would entail going back in (physical) time; even for reversible processes it is difficult to conceive backpropagation through time, since the computations performed in the forward and backward phases of this algorithm are not the same. Provided that the process which governs the time evolution of the parameters is differentiable, forward differentiation \citep[also known as real-time recurrent learning, cf.][]{williams_learning_1989} is the classic alternative to backpropagation which avoids going backwards in time. However, in its original form, this algorithm is typically infeasible to implement as well. First, its memory requirements scale with the dimension of $\theta$ multiplied by the dimension of $\phi$, which results in a huge memory cost. This is in fact a concern for most standard computer implementations as well. Second, the algorithm requires computing Jacobian-vector products, which may or may not be difficult to calculate in physical systems. Most of the concerns outlined above apply equally when looking at backpropagation or forward differentiation as biological learning algorithms. Some approximations have been developed to circumvent these limitations \citep{sutton_adapting_1992, tallec_unbiased_2018, bellec_solution_2020, marschall_unied_2020, menick_practical_2021}. 

\subsection{Comparison of the different implicit methods}

\begin{figure}[ht]
    \centering
    \begin{subfigure}{246pt}
        \caption{\normalfont \normalsize Implicit differentiation}
        \vspace{2pt}
        \includegraphics{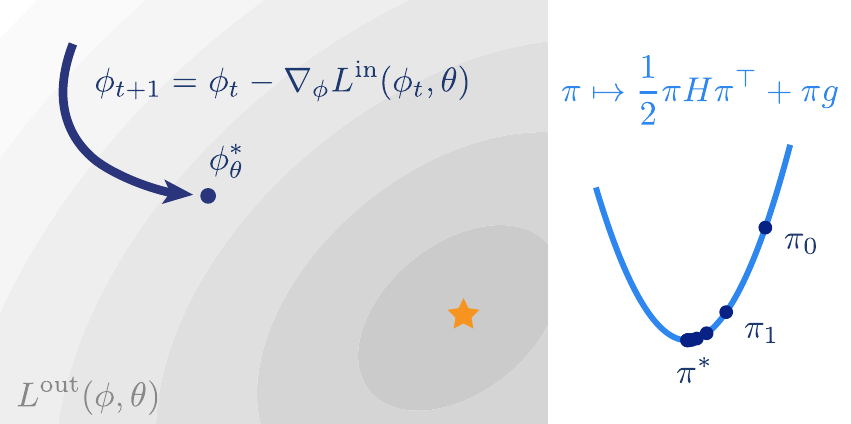}
    \end{subfigure}
    \hspace{4pt}
    \begin{subfigure}{157pt}
        \caption{\normalfont \normalsize Equilibrium propagation}
        \vspace{2pt}
        \includegraphics{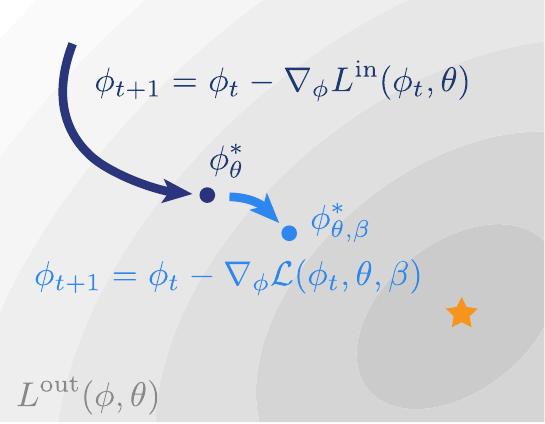}
    \end{subfigure}
    \caption{Visual comparison of implicit differentiation (A) and equilibrium propagation (B). The goal of bilevel optimization is to search for outer parameters $\theta$ such that the corresponding $\phist$ minimizes the outer loss $\Lo$ (loss magnitude is plotted in grey, the darker the smaller; the star denotes a minimum of $\Lo$). The dark blue curve represents the inner optimization dynamics, here chosen to be gradient descent, which is the same for the two approaches. The gradient calculation however differs. (A) An auxiliary quadratic form minimization problem is solved ($H$ here denotes the Hessian matrix $\partial_\phi^2 \Li(\phist, \theta)$ and $g$ the gradient $\partial_\phi \Lo(\phist, \theta)^\top$) and the output $\pi^*$ is then used to estimate the outer gradient. (B) The inner objective is nudged towards the outer objective through the augmented loss $\Lt$, which is then minimized (light blue curve). The outer gradient is then estimated by contrasting partial derivatives evaluated at $\phist$ and $\phisbt$.}
    \label{fig:comparison}
\end{figure}

Finally, we compare the methods presented in the previous sections. More concretely, we consider methods which use first-order (FO) approximations of the outer gradient\footnote{Note that, as we mentioned in Section~\ref{sec:idb_methods}, those methods can be seen as implicit differentiation methods with extremely short second phases}, equilibrium propagation (EP), recurrent backpropagation (RBP), and the conjugate gradient (CG) method. We determine use cases for the different methods based on three criteria: efficiency when all the theoretical assumptions are met, robustness to violation of the assumptions, and simplicity of the methods in terms of the computational elements involved. The result of the comparison is summarized in Table~\ref{tab:comparison_id} and a visual comparison of the algorithm is provided in Figure~\ref{fig:comparison}.

\begin{table}[]
    \renewcommand{\arraystretch}{1.3}
    \centering
    \begin{tabular}{@{}lrrr@{}}
        \toprule
         & Efficiency & Robustness & Simplicity \\
        \midrule
        First-order approximation & + & +++ & +++ \\
        Recurrent backpropagation & ++ & ++ & +\\
        Conjugate gradients & +++ & + & +\\
        Equilibrium propagation & ++ & ++ & ++\\
        \bottomrule
    \end{tabular}
    \caption{Summary of the comparison between methods rooted in implicit differentiation}
    \label{tab:comparison_id}
\end{table}

\paragraph{Efficiency under met assumptions.} We build our comparison upon the theoretical analysis presented in the last section (Theorem~\ref{thm:idb_analysis} and Corollary~\ref{cor:ep_analysis_opt}) and want to figure out which method produces the best estimate. It assumes that $\hat{\phi}$ is sufficiently close to $\phist$ so that all the methods are properly defined.

First-order methods here suffer as the estimation they provide cannot be refined to get closer to the outer gradient. For the remaining methods we have to consider two things: how sensitive is the gradient estimate to the approximations made in the two phases and which optimizers are used. Regarding the first point, implicit differentiation methods (CG and RBP) outclass equilibrium propagation. All methods minimize the same objective in the first phase so it is reasonable to consider that they use the same optimizer. For the second phase, RBP uses gradient descent, CG conjugate gradients, and EP whatever optimizer is best suited to the augmented objective. This implies that the algorithm with the best guarantees is CG as the optimizer it uses in the second phase is extremely efficient (hence more efficient than the one EP would use). The comparison between EP and RBP depends on the problem considered and requires empirical evidence.

\paragraph{Robustness to violated assumptions.} In the last paragraph, we assumed that we are sufficiently close to minimizing the inner loss so that all implicit methods are properly justified. We now look at how they behave when those conditions are not met. First-order methods here shine as they just perform a crude approximation and do not rely on those assumptions. In principle, both CG and RBP would have a diverging second phase if the Hessian is not positive definite but in practice, it seems that CG is much more unstable \citep{liao_reviving_2018, lorraine_optimizing_2020, grosse_lecture_2021}. EP does not have diverging second phase as long as $\beta \geq 0$ and the inner  and outer losses are bounded from below.

\paragraph{Simplicity of the computational elements.} The methods we compare here require different computational elements. While every method requires computing partial derivatives with respect to the outer parameters, approximate first-order methods stand out in their simplicity of implementation. In particular, these methods do not even require storing the result of the first phase. On the other hand, implicit differentiation methods are the most complex to implement as they involve calculating Hessian-vector products. In digital computers, automatic differentiation software offers efficient implementations of this operation (e.g., \citep{abadi_tensorflow_2016, paszke_pytorch_2019, bradbury_jax_2018}). However, implementing Hessian-vector products can be challenging in large-scale distributed systems, neuromorphic hardware, or more exotic analog physical systems. Arguably, it is also hard to conceive such an operation as being biologically plausible. Remarkably, equilibrium propagation only requires contrasting partial derivatives and, therefore, avoids computing such Hessian-vector-products. Recent developments \citep{scellier_agnostic_2022} on equilibrium propagation have shown that the outer gradient can still be estimated when the inner loss function underlying the (bio)physical system dynamics and its partial derivatives are unknown, as long as the parameters $\theta$ can be externally controlled. This considerably widens the scope of systems in which equilibrium propagation can be applied.

\paragraph{Which method to choose?} First-order methods tend to work best off-the-shelf, without extensive tuning, so they are a good choice if performance is not the most important criterion. When performance is important and inner optimization is easy enough so that it is possible to closely approximate a local minimum of the inner loss function, the conjugate gradient method is the best one. If it reveals to be too unstable, recurrent backpropagation might solve those instability issues. Finally, if computing Hessian-vector-products is not an option, but performance is still important, equilibrium propagation is worth being considered.


\section{Conclusion}

We have presented bilevel optimization in a broad machine learning context and discussed gradient-based methods to solve such problems. Framing learning as bilevel optimization generalizes the traditional cost-minimization view of learning to computations that are not necessarily explicitly described, and that therefore cannot be learned through gradient descent with backpropagated errors. The implicit methods we reviewed here, either rooted in implicit differentiation or equilibrium propagation, allow computing gradients for such problems using local information, and sometimes using only elementary operations. These properties may turn out to be of particular importance for the development of biological theories of learning, as well as for the development of next-generation learning machines.

\subsubsection*{Acknowledgements}
This research was supported by an Ambizione grant (PZ00P3\_186027) from the Swiss National Science Foundation and an ETH Research Grant (ETH-23 21-1) awarded to João Sacramento. We thank Benjamin Scellier, Johannes von Oswald and Simon Schug for their detailed comments on this manuscript.

\bibliographystyle{apalike}
\bibliography{refs}

\newpage

\appendix

\section{Some bilevel optimization learning problems}
\label{app:bo_formulations}

We here review different learning problems that fit in the bilevel optimization framework.

\subsection{Energy-based neural networks}


\paragraph{Explicit description of neural networks.}
Neural networks are usually described through the computations that they perform to process an input signal $x$. For a feedforward neural network it usually takes the following form:
\begin{equation}
    \phi^0=x, ~~\phi^{l+1} = \rho(W^{l}\phi^{l} + b^l)
\end{equation}
where $\phi^l$ corresponds to the activity of the neurons from the $l$-th layer, $\rho$ to a non-linear activation function, $W^l$ to the weights connecting layer $l$ to layer $l+1$ and $b^l$ to the biases of layer $l$. In the supervised learning framework, the activity $\phi^L$ at the very last layer is compared to a desired output $y$ through a cost function $C(\phi^L, y)$.
The backpropagation algorithm \citep{linnainmaa_taylor_1976, werbos_applications_1982, rumelhart_learning_1986} propagates the error measured at the last layer towards the first layers of the network to efficiently compute gradients and then learn the weights of the network. 

\paragraph{Energy-based description.} An alternative description of neural networks is to consider that the activity is an equilibrium of some energy function $E$. Going from an explicit to an implicit description is easy for the feedforward neural network\footnote{Similar manipulations can be done in general to obtain an implicit description of a system from an explicit one}: the energy function
\begin{equation}
    \label{eqn:pc_energy}
    E(\phi, \theta, x) := \frac{1}{2}\snorm{\phi^0-x}^2 + \frac{1}{2} \sum_{l=0}^{L-1} \snorm{\phi^{l+1} - \rho(W^l\phi^l+b^l)}^2
\end{equation}
has only one global minimizer which is the neuronal activity $\phist$, as computed through the feedforward processing described above. 

The energy-based formulation is more than a mathematical reformulation. For example, the energy \eqref{eqn:pc_energy} is derived from an approximate probabilistic approach in the predictive coding framework \citep{rao_predictive_1999, whittington_approximation_2017}. Other types of energy functions also exist, such as the Hopfield energy \citep{hopfield_neurons_1984, scellier_equilibrium_2017}, and encompasses computations that cannot be formulated explicitly. Although the term \textit{energy} has a physical meaning, physical networks can minimize other quantities than the physical energy, such as the co-content for electrical circuits \citep{millar_general_1951, kendall_training_2020}.

Under this paradigm, learning under supervision can be formulated as the following bilevel optimization:
\begin{equation}
    \begin{split}
        & \min_\theta ~ \mathbb{E}_{(x,y)}\left [ C(\phi_{\theta}^*,y) \right ]\\
        & \text{s.t.}~ \phi_{\theta}^* \in \argmin_\phi E(\phi, \theta, x).
    \end{split}
\end{equation}
Backpropagation is not generally applicable to compute gradients associated to this optimization problem. This is why we introduce implicit methods in this paper.

\subsection{Hyperparameter optimization and meta-learning}
\label{app:interactions_meta_learning}

We have briefly introduced bilevel optimization for hyperparameter optimization and meta-learning in Section~\ref{subsec:hp_ml}. Recall that in this context bilevel optimization generally takes the form
\begin{equation}
    \begin{split}
        & \min_\theta \mathbb{E}_{\tau} \left [ \Lo(\phi_{\tau,\theta}^*, \theta, \mathcal{D}_\tau^\text{val}) \right ]\\
        & \mathrm{s.t.}~ \phi_{\tau, \theta}^* \in \argmin_\phi \Li(\phi, \theta, \mathcal{D}_\tau^\text{train}),
    \end{split}
\end{equation}
where expectation is taken over multiple tasks $\tau$ for meta-learning and over a single one for hyperparameter optimization (the expectation then disappears).

The purpose of this section is to underline the diversity of interactions between inner  and outer parameters, which are also referred to as base and meta parameters in meta-learning. We have mentioned in Section~\ref{subsec:hp_ml} that the outer parameters can be the parameters of a quadratic regularization in the context of hyperparameter optimization \citep{goutte_adaptive_1998, bengio_gradient-based_2000} but the very same regularizer can be used in meta-learning \citep{rajeswaran_game_2020, zucchet_contrastive_2021}. In meta-learning, the center of the regularization is also meta-learned, providing a rough idea of which base parameter configuration works well on the task distribution. Another example is when the meta-parameters are the weights of a hypernetwork \citep{ha_hypernetworks_2017} that take the inner parameters as input to produce the weights of the network that processes incoming data \citep{lorraine_stochastic_2018, mackay_self-tuning_2019,rusu_meta-learning_2019, zhao_meta-learning_2020}. The task-specific modification can also be done at the neurons level \citep{zintgraf_fast_2019, mudrakarta_k_2019, zucchet_contrastive_2021}, while keeping the weights of the neural network shared across tasks. Alternatively, the task-shared outer parameters can be the weights of a neural network that acts as a feature extractor that will help a task-specific classifier or regressor parameterized by the base parameters to solve the task at hand \citep{raghu_rapid_2020, lee_meta-learning_2019, bertinetto_meta-learning_2019}.

\subsection{Generative adversarial networks} 

Generative adversarial networks \citep{goodfellow_generative_2014, metz_unrolled_2017} consist of a generative and a discriminative network that are learned in an adversarial fashion. The discriminator, parametrized by $\phi$ has to distinguish between samples generated by the generator and samples coming from the true data distribution $p(x)$. On the other side, the objective of the generative model $g_\theta$ is to generate samples that fool a perfect discriminator $D_{\phi^*_\theta}$. The corresponding optimization bilevel optimization problem is:
\begin{equation}
    \begin{split}
        & \min_\theta -\mathrm{E}_{z \sim \mathcal{N}(0, 1)} \left [ \log D_{\phi^*_\theta}(g_\theta(z)) \right ] \\
        & \mathrm{s.t.}~ \phi_\theta^* \in \argmin_\phi \mathrm{E}_{x \sim p(x)} \left [ \log D_\phi(x) \right ] + \mathrm{E}_{z\sim \mathcal{N}(0,1)} \left [ \log \left ( 1-D_\phi(g_\theta(z) \right ) \right ] \! .
    \end{split}
\end{equation}

\subsection{Actor-critic}

Some reinforcement learning problems can be formulated as bilevel optimization, such as actor-critics \citep{konda_actor-critic_2000}. The objective here is to learn an actor, which is a policy that tries to maximize the expected reward received while interacting with an environment. It receives help from a critic, an action-value function, which gives better feedback to the actor than the reward only. Following \citep{pfau_connecting_2016, yang_global_2019, zhou_online_2020, hong_two-timescale_2020}, training an actor-critic can be formulated as
\begin{equation}
    \begin{split}
        &\max_\theta ~ \mathbb{E}_{s \sim \rho, ~ a \sim \pi_\theta(\cdot|s)} \left [ Q_{\phi_\theta^*}(s, a) \right ]\\
        & \text{s.t.} ~ \phi^*_\theta \in \argmin \mathbb{E}_{s \sim \rho, ~ a \sim \pi_\theta(\cdot|s)} \left [ \left ( Q_\phi(s,a) - Q^{\pi_\theta}(s, a) \right )^2\right ] \!,
    \end{split}
\end{equation}
where $\rho$ is the initial state distribution, $\pi_\theta(\cdot|s)$ is the policy distribution parametrized by $\theta$ (the actor), $Q^{\pi_\theta}$ its corresponding Q-function and $Q_\phi$ the approximate Q network (the critic). A similar formulation exists in model-based reinforcement learning where the critic is replaced by a model which tries to predict the feature \citep{rajeswaran_game_2020}.

\subsection{Stackelberg games}

Interestingly the last two examples can be given a game-theoric interpretation through the notion of Stackelberg games. Stackelberg games \citep{von_stackelberg_market_1934} are a class of games where two players, a leader and a follower, play with hierarchical order. The leader $\theta$ has a strategic advantage: it plays first and knows what will be the perfect answer of the follower $\phi$. In the bilevel optimization framework, the follower's best response minimizes the inner loss and the leader optimizes the outer loss knowing the perfect answer of the follower $\phist$.

For generative adversarial networks, the generator is the leader and the discriminator the follower in the Stackelberg game terminology \citep{fiez_implicit_2020}. For actor-critic methods, the actor is the leader, as we ultimately want to get a good working policy, and the critic is the follower \citep{zheng_stackelberg_2021}.

\section{Theoretical analysis for implicit differentiation methods}
\label{app:idb_analysis}

\subsection{Proof of Theorem \ref{thm:idb_analysis}}

We here prove Theorem~\ref{thm:idb_analysis}. The proof is inspired from \citet{pedregosa_hyperparameter_2016}, which proves a very similar result under local assumptions, and \citet{rajeswaran_meta-learning_2019} which uses global assumptions and study the regularized inner loss we studied in Section~\ref{subsec:hp_ml}. The proof we here present uses the general formulation of the former with the stronger assumptions of the latter, in the goal of making the proof as insightful as possible to the reader.

Let us first rewrite the assumptions and the statement of the theorem.
\begin{repassumption}{ass:ibd_analysis}
    Suppose that there exists positive real numbers $(\mu, \rho, B, L, M)$ such that:
    \begin{enumerate}
        \item[i.] $\Li$ is twice continuously differentiable and $\Lo$ is continuously differentiable.
        \item[ii.] $\Li$ is $\mu$-strongly convex as a function of $\phi$.
        \item[iii.] The second-order derivatives (Hessian and cross derivatives) of $\Li$ are $\rho$-Lipschitz as functions of $\phi$.
        \item[iv.]  As functions of $\phi$, $\Lo$ is $B$-Lipschitz, $L$-smooth and $\spder{\theta}\Lo$ is $M$-Lipschitz,
    \end{enumerate}
\end{repassumption}
\begin{reptheorem}{thm:idb_analysis}
    Let $\phist$ be a minimizer of $\Li$ and $\hat{\phi}$ be its approximated value. Let $\hat{\pi}$ be an approximation of $\pi^*$ computed by one of the implicit differentiation methods. Let $\delta \in \, ]0, \frac{\mu}{2\rho}[$ be such that
    \begin{equation*}
        \snorm{\phist-\hat{\phi}} \leq \delta 
    \end{equation*}
    and $\delta' > 0$ such that
    \begin{equation*}
        \snorm{\hat{\pi} - \spder{\phi}{\Lo}(\hat{\phi}, \theta)\,\spder{\phi}{^2\Li}(\hat{\phi},\theta) ^{-1}} \leq \delta'.
    \end{equation*}
    Then, under Assumption~\ref{ass:ibd_analysis}, there exists a constant $C$ such that
    \begin{equation*}
        \snorm{\nabla_\theta - \widehat{\nabla}_\theta} \leq C(\delta+\delta').
    \end{equation*}
\end{reptheorem}

The main idea of the proof is to show that the outer gradient estimation error introduced by the implicit differentiation algorithm comes from two different sources: the fixed-point approximation error and the finite number of steps in the estimation of $\hat{\pi}$. Bounding the impact of the first source will be straight forward but the second one requires more work. This stems in the fact that implicit differentiation methods do not directly approximate $\pi^*$ in their second phase but only the proxy $\spder{\phi}{\Lo}(\hat{\phi}, \theta)\,\spder{\phi}{^2\Li}(\hat{\phi},\theta) ^{-1}$. We therefore need to quantify how far is the proxy from $\pi^*$. This can be done by remarking that the two are solutions of two similar linear systems. Lemma \ref{lem:perturbed_linearsys} is a result from perturbed linear systems theory that will allow us to upper bound the distance between the two.

\begin{lemma}[Theorem 7.2 \citep{higham_accuracy_2002}]
    \label{lem:perturbed_linearsys}
    Let $Ax=b$ and $A'x'=b'$ two linear systems with $\snorm{A-A'}\leq \varepsilon_A$ and $\snorm{b-b'}\leq \varepsilon_b$. If $\varepsilon_A \norm{A^{-1}} < 1$, then
    \begin{equation*}
        \snorm{x-x'} \leq \frac{\snorm{A^{-1}}}{1-\varepsilon_A\snorm{A^{-1}}}\left ( \varepsilon_b + \snorm{A^{-1}b}\varepsilon_A\right )\!.
    \end{equation*}
\end{lemma}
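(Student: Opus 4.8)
The plan is to establish the bound by a direct algebraic manipulation of the two linear systems, invoking invertibility of $A$ throughout and the smallness condition $\varepsilon_A \snorm{A^{-1}} < 1$ only at the final rearrangement. Write $E := A' - A$, so that $\snorm{E} \leq \varepsilon_A$. First I would note that $A'$ is invertible, so that $x'$ is the unique solution of the perturbed system: since $A' = A(\Id + A^{-1}E)$ with $\snorm{A^{-1}E} \leq \snorm{A^{-1}}\varepsilon_A < 1$, the factor $\Id + A^{-1}E$ is invertible by the Neumann series, and $A$ is invertible by assumption.

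The key step is to derive an exact expression for the error $x - x'$. Starting from $Ax = b$ and $A'x' = (A+E)x' = b'$, I would subtract the two to obtain
\begin{equation*}
    A(x - x') = (b - b') + E\, x',
\end{equation*}
and then multiply on the left by $A^{-1}$:
\begin{equation*}
    x - x' = A^{-1}(b - b') + A^{-1} E\, x'.
\end{equation*}
Taking norms and using submultiplicativity together with $\snorm{b-b'} \leq \varepsilon_b$ and $\snorm{E} \leq \varepsilon_A$ yields
\begin{equation*}
    \snorm{x - x'} \leq \snorm{A^{-1}}\varepsilon_b + \snorm{A^{-1}}\varepsilon_A \snorm{x'}.
\end{equation*}

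The main obstacle is that the right-hand side still contains the unknown solution $x'$. I would eliminate this dependence with the self-bounding estimate $\snorm{x'} \leq \snorm{x} + \snorm{x - x'}$, giving
\begin{equation*}
    \snorm{x - x'} \leq \snorm{A^{-1}}\varepsilon_b + \snorm{A^{-1}}\varepsilon_A\left(\snorm{x} + \snorm{x-x'}\right).
\end{equation*}
This is exactly where the hypothesis $\varepsilon_A \snorm{A^{-1}} < 1$ enters: collecting the $\snorm{x-x'}$ terms, the coefficient $1 - \varepsilon_A \snorm{A^{-1}}$ is strictly positive, so dividing through is legitimate and produces
\begin{equation*}
    \snorm{x - x'} \leq \frac{\snorm{A^{-1}}}{1 - \varepsilon_A\snorm{A^{-1}}}\left(\varepsilon_b + \varepsilon_A \snorm{x}\right).
\end{equation*}

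Finally, substituting $\snorm{x} = \snorm{A^{-1}b}$ (since $x = A^{-1}b$) recovers the claimed inequality. No regularity beyond invertibility of $A$ is needed; the argument is elementary linear algebra, and the only genuine subtlety is the rearrangement that isolates $\snorm{x-x'}$, which is precisely what the smallness condition on $\varepsilon_A$ makes possible.
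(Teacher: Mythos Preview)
Your proof is correct and follows essentially the same route as the paper's: both derive an identity of the form $A(x-x') = (b-b') + (A'-A)x'$, invert, take norms, and then isolate $\snorm{x-x'}$ using the smallness condition. The only cosmetic difference is that the paper splits $(A-A')x' = (A-A')x + (A-A')(x'-x)$ algebraically before taking norms, whereas you take norms first and then apply the triangle inequality $\snorm{x'} \leq \snorm{x} + \snorm{x-x'}$; the resulting inequalities coincide.
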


\begin{proof}
    Consider the quantity $A(x'-x)$. It is equal to
    \begin{equation*}
        \begin{split}
            A(x'-x) &= A'x' + (A-A')x' - Ax\\
            &= b' + (A-A')x' - b\\
            &= b'-b + (A-A')x+ (A-A')(x'-x).
        \end{split}
    \end{equation*}
    Then,
    \begin{equation*}
        x'-x = A^{-1}\left(b'-b + (A-A')A^{-1}b+ (A-A')(x'-x)\right)
    \end{equation*}
    so 
    \begin{equation*}
        \snorm{x'-x} \leq \snorm{A^{-1}} \left ( \varepsilon_b + \varepsilon_A\snorm{A^{-1}b} + \varepsilon_A \snorm{x'-x}\right )\!,
    \end{equation*}
    which yields the required result after subtracting $\varepsilon_A\snorm{A^{-1}}\snorm{x'-x}$ to both sides.
\end{proof}

With this result, we can now prove Theorem~\ref{thm:idb_analysis}.
\begin{proof}[Proof of Theorem \ref{thm:idb_analysis}]
    Recall that
    \begin{equation*}
        \nabla_\theta^\top = \pder{\theta}{\Lo}(\phist,\theta) - \pi^* \pder{\theta\partial\phi}{^2\Li}(\phist,\theta)
    \end{equation*}
    for
    \begin{equation*}
        \pi^* = \pder{\phi}{\Lo}(\phist, \theta) \left( \pder{\phi^2}{^2\Li}(\phist,\theta) \right)^{-1}
    \end{equation*}
    is estimated with
    \begin{equation*}
        \widehat{\nabla}_\theta^\top = \pder{\theta}{\Lo}(\hat{\phi},\theta) - \hat{\pi} \, \pder{\theta\partial\phi}{^2\Li}(\hat{\phi},\theta).
    \end{equation*}
    We introduce the shorthand $\chi^* \coloneqq \partial_\theta \partial_\phi \Li(\phist,\theta)$ and $\hat{\chi}$ its estimated counterpart. We then have
    \begin{align*}
        \snorm{\nabla_\theta - \widehat{\nabla}_\theta} &\leq \snorm{\partial_\theta \Lo(\phist, \theta)-\partial_\theta \Lo(\hat{\phi}, \theta)} + \snorm{\pi^*\chi^*-\hat{\pi}\hat{\chi}}\\ 
        &\leq \underbrace{\snorm{\partial_\theta \Lo(\phist, \theta)-\partial_\theta \Lo(\hat{\phi}, \theta)}}_{a)} + \underbrace{\snorm{\pi^*(\chi^*-\hat{\chi})}}_{b)} + \underbrace{\snorm{(\pi^*-\hat{\pi})\hat{\chi}}}_{c)}.
    \end{align*}
    We bound each term.
    \begin{itemize}
        \item[a)] From the Lipschitz continuity of $\partial_\theta \Lo$ comes 
            \begin{equation*}
                \snorm{\partial_\theta\Lo(\phist,\theta)- \partial_\theta\Lo(\hat{\phi}, \theta)} \leq M\snorm{\phist-\hat{\phi}} \leq M\delta.
            \end{equation*}
        \item[b)] The $\mu$-strong convexity of $\Li$ and the $B$-Lipschitz continuity of $\Lo$ implies that
            \begin{equation*}
                \snorm{\pi^*} \leq \snorm{\spder{\phi}{\Lo}(\phist, \theta)}\snorm{ \spder{\phi}{^2\Li}(\phist,\theta)^{-1}} \leq B \frac{1}{\mu}.
            \end{equation*} 
            Using the Lipschitz continuity of the cross derivatives of $\Li$ we have
            \begin{equation*}
                \snorm{\hat{\chi}-\chi^*} \leq \rho \delta
            \end{equation*}
            so
            \begin{equation*}
                \snorm{(\chi^*-\hat{\chi})\pi^*} \leq \rho\delta\snorm{\pi^*} \leq \rho\delta \frac{B}{\mu}.
            \end{equation*}
        \item[c)] Lipschitz continuity of $\partial_\theta\Li$ yields $\snorm{\partial_\phi\partial_\theta\Li(\hat{\phi}, \theta)} \leq M$. With the symmetry of the cross derivatives and the fact that the norm of a matrix equals the norm of its transpose, we have $\snorm{\hat{\chi}} \leq M$ and
        \begin{equation*}
            \snorm{\hat{\chi}(\pi^*-\hat{\pi})} \leq M\snorm{\pi^*-\hat{\pi}}
        \end{equation*}
        In the term $\snorm{\hat{\pi}-\pi^*}$, we still take in account the error made in the fixed point approximation. We can separate it with
        \begin{align*}
            \snorm{\hat{\pi}-\pi^*} \leq & ~ \snorm{\hat{\pi} - \spder{\phi}{\Lo}(\hat{\phi}, \theta)\,\spder{\phi}{^2\Li}(\hat{\phi},\theta)^{-1}}\\
            & + \snorm{\spder{\phi}{\Lo}(\hat{\phi}, \theta)\,\spder{\phi}{^2\Li}(\hat{\phi},\theta)^{-1}-\pi^*}\\
            \leq& ~ \delta'+ \snorm{\spder{\phi}{\Lo}(\hat{\phi}, \theta)\,\spder{\phi}{^2\Li}(\hat{\phi},\theta)^{-1}-\pi^*}.
        \end{align*}
        The second term now only depends on the fixed point approximation error and can be bounded using Lemma \ref{lem:perturbed_linearsys}. Due to the $\rho$-Hessian Lipschitz property of $\Li$, 
        \begin{equation*}
            \varepsilon_A \coloneqq \snorm{\spder{\phi}{^2\Li}(\phist, \theta)-\spder{\phi}{^2\Li}(\hat{\phi}, \theta)}\leq \rho\delta.
        \end{equation*}
        The use of the lemma is then justified by the upper bound assumption on $\delta$:
        \begin{equation*}
            \varepsilon_A \snorm{\partial_\phi^2\Li(\phist, \theta)^{-1}} \leq\rho\delta/\mu \leq 1/2 < 1.
        \end{equation*}
        The smoothness of $\Li$ implies
        \begin{equation*}
            \varepsilon_b \coloneqq \snorm{\partial_\phi \Lo(\phist, \theta) - \partial_\phi \Li(\hat{\phi}, \theta)} \leq L\delta.
        \end{equation*}
        We can now apply the lemma, which yields
        \begin{align*}
            \snorm{\spder{\phi}{\Lo}(\hat{\phi}, \theta)\,\spder{\phi}{^2\Li}(\hat{\phi},\theta)^{-1}-\pi^*} & \leq \frac{\mu^{-1}}{1-1/2}\left(L\delta + \snorm{\pi^*}\rho\delta\right)\\
            & \leq \frac{2\delta}{\mu}\left(L + \frac{B\rho}{\mu}\right)\!.
        \end{align*}
        We have therefore proved
        \begin{equation*}
            \snorm{\hat{\chi}(\hat{\pi}-\pi^*)} \leq M \left ( \delta' + \frac{2\delta}{\mu}\left(L + \frac{B\rho}{\mu}\right) \right )\!.
        \end{equation*}
    \end{itemize}
    Gathering the three bounds gives
    \begin{equation*}
        \snorm{\nabla_\theta - \widehat{\nabla}_\theta} \leq M\delta + \frac{B\rho}{\mu}\delta + M\left(\delta' + \frac{2\delta}{\mu}\left(L+\frac{B\rho}{\mu}\right)\right)\!.
    \end{equation*}
    Choosing
    \begin{equation*}
        C \coloneqq M + \frac{B\rho+2ML}{\mu} + \frac{2MB\rho}{\mu^2}
    \end{equation*}
    finishes the proof.
\end{proof}

\subsection{Extension to local assumptions}
\label{app:ift_extension_local}

In Theorem~\ref{thm:idb_analysis} we assumed the strong convexity of $\Li$ to get a bound on the outer gradient estimation error. We can get a more local version of it if we only assume that the Hessian at a minimum $\phi^*$ of $\Li$ is positive definite, i.e., that the minimum is not flat. The idea is to show that when the Hessian is continuous and it is positive definite at $\phist$, $\Li$ will be strongly convex in a neighborhood of $\phist$, which allows to go back to the assumptions of Theorem~\ref{thm:idb_analysis}. This is formalized in Fact~\ref{fact:local_strong_convexity}.
\begin{fact}
    \label{fact:local_strong_convexity}
    Let $\phi^*$ be a local minimum of $\Li$ such that $\partial_\phi^2\Li(\phi^*,\theta)$ is positive definite. Note $\mu$ its smallest (strictly positive) eigenvalue. If $\Li$ is $\rho$-Lipschitz Hessian, then $\Li$ is $\mu/2$-strongly convex on the ball of radius $\mu/2\rho$ centered on $\phi^*$.
\end{fact}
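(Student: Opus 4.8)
The plan is to show the Hessian $\partial_\phi^2 \Li(\,\cdot\,, \theta)$ stays uniformly positive definite, with smallest eigenvalue at least $\mu/2$, everywhere on the ball $\mathcal{B} \coloneqq \{\phi : \snorm{\phi - \phi^*} \leq \mu/(2\rho)\}$, and then to upgrade this uniform curvature lower bound to $\mu/2$-strong convexity on $\mathcal{B}$ via a second-order Taylor argument.

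First I would fix an arbitrary $\phi \in \mathcal{B}$ and use the $\rho$-Lipschitzness of the Hessian to control the deviation from the base point: $\snorm{\partial_\phi^2 \Li(\phi, \theta) - \partial_\phi^2 \Li(\phi^*, \theta)} \leq \rho\snorm{\phi-\phi^*} \leq \rho \cdot \frac{\mu}{2\rho} = \frac{\mu}{2}$. Both matrices are symmetric, and a symmetric perturbation of operator norm at most $\varepsilon$ moves every eigenvalue by at most $\varepsilon$ (Weyl's inequality); applied to the smallest eigenvalue this gives $\lambda_{\min}(\partial_\phi^2 \Li(\phi, \theta)) \geq \lambda_{\min}(\partial_\phi^2 \Li(\phi^*, \theta)) - \frac{\mu}{2} = \mu - \frac{\mu}{2} = \frac{\mu}{2}$, i.e.\ $\partial_\phi^2 \Li(\phi, \theta) \succeq \frac{\mu}{2}\Id$ for every $\phi \in \mathcal{B}$.

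The second step converts this pointwise curvature bound into strong convexity. The key observation is that $\mathcal{B}$ is convex, so for any $\phi_1, \phi_2 \in \mathcal{B}$ the segment $\phi_t \coloneqq \phi_1 + t(\phi_2 - \phi_1)$ stays in $\mathcal{B}$ for $t \in [0,1]$, and the second-order Taylor expansion with integral remainder
\begin{equation*}
    \Li(\phi_2, \theta) = \Li(\phi_1, \theta) + \partial_\phi \Li(\phi_1, \theta)(\phi_2 - \phi_1) + \int_0^1 (1-t)\,(\phi_2-\phi_1)^\top \partial_\phi^2 \Li(\phi_t, \theta)(\phi_2 - \phi_1)\,\mathrm{d}t
\end{equation*}
combined with $\partial_\phi^2 \Li(\phi_t, \theta) \succeq \frac{\mu}{2}\Id$ and $\int_0^1 (1-t)\,\mathrm{d}t = \frac12$ yields $\Li(\phi_2, \theta) \geq \Li(\phi_1,\theta) + \partial_\phi \Li(\phi_1, \theta)(\phi_2 - \phi_1) + \frac{\mu}{4}\snorm{\phi_2-\phi_1}^2$, which is exactly the defining inequality for $\mu/2$-strong convexity of $\Li(\,\cdot\,,\theta)$ restricted to $\mathcal{B}$.

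I do not expect a genuine obstacle. The two points needing a little care are: invoking the eigenvalue-perturbation bound in the right direction (it is the \emph{smallest} eigenvalue that could drop, so it is bounded below by $\mu - \mu/2$, not above), and keeping the whole argument confined to the ball $\mathcal{B}$ — strong convexity is a statement about segments between pairs of points, so the convexity of $\mathcal{B}$ is what legitimizes the Taylor step and genuinely cannot be dispensed with.
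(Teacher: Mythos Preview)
Your argument is correct and is the standard route: bound the Hessian perturbation via the $\rho$-Lipschitz assumption, invoke Weyl's inequality to keep $\lambda_{\min}\geq\mu/2$ on the ball, and integrate along segments (using convexity of the ball) to recover the strong-convexity inequality. The paper itself does not supply a proof of this statement --- it is recorded as a \emph{Fact} in Appendix~\ref{app:ift_extension_local} and left to the reader --- so there is nothing to compare against; your write-up would serve perfectly well as the missing justification.
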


Interestingly, the Hessian of $\Li$ at $\hat{\phi}$ is not necessarily positive semi-definite when $\hat{\phi}$ outside the ball centered in $\hat{\phi}$ with radius $\mu/\rho$ (with the notations of Assumption~\ref{ass:ibd_analysis} and Fact~\ref{fact:local_strong_convexity}). In this case, the quadratic form (\ref{eqn:quadratic_form_ift}) is not bounded from below and procedures that try to minimize it will diverge.

\section{Equilibrium propagation estimators with multiple points}
\label{app:details_ep_estimators}

In Section~\ref{sec:epb_methods}, we have presented a way to estimate the outer gradient formula given by the equilibrium propagation theorem using 2 points. Recall that the equilibrium propagation allows to reformulate the outer gradient $\outg$ as
\begin{equation*}
    \outg = \left . \der{\beta}{}\pder{\theta}{\Lt}(\phisbt, \theta, \beta) \right |_{\beta=0}.
\end{equation*}
The simplest finite different estimator is the two points estimator that we have presented in Section~\ref{sec:epb_methods}:
\begin{equation*}
    \eoutg^\top = \frac{1}{\beta} \left ( \pder{\theta}{\Lt}(\phisbt, \theta, \beta) - \pder{\theta}{\Lt}(\phiszt, \theta, 0) \right )\!.
\end{equation*}
We now derive an estimator that uses several points to make a more accurate estimation of the derivative.

\paragraph{Forward finite differences.} The objective of this paragraph is to derive the $p$-forward finite difference learning rule that uses $p$ points to get a finer approximation of the outer gradient. Consider the values of the function $f:t \mapsto \partial_\theta\Lt(\phi_{\theta,t}^*, \theta, t)$ for $t \in \{0, \beta, \dots, (p-1)\beta\}$. We seek to find a linear combination of those measurements that approximates $\nabla_\theta=\evalat{\mathrm{d}_\beta\partial_\theta\Lt(\phisbt, \theta, \beta)}{\beta=0}=f'(0)$, i.e., find a vector $\alpha \in \mathbb{R}^p$ such that
\begin{equation}
    \label{eqn:finite_forward_diff_objective}
    \sum_{i=0}^{p-1} \alpha_if(i\beta) = \beta f'(0) + O(\beta^{p}).
\end{equation}
Taylor series approximation (around $\beta=0$) and an inversion of the summation indices yield
\begin{equation}
    \label{eqn:finite_forward_diff_estimation}
    \begin{split}
        \sum_{i=0}^{p-1} \alpha_if(i\beta) & = \sum_{i=0}^{p-1} \alpha_i \left ( \sum_{k=0}^{p-1}f^{(k)}(0)\frac{(i\beta)^k}{k!} + O(\beta^{p}) \right)\\
        & = \sum_{k=0}^{p-1} \sum_{i=0}^{p-1} \alpha_i f^{(k)}(0)\frac{(i\beta)^k}{k!} + O(\beta^{p})\\
        & = \sum_{k=0}^{p-1} f^{(k)}(0)\frac{\beta^k}{k!} \sum_{i=0}^{p-1} \alpha_i i^k+ O(\beta^{p}).
    \end{split}
\end{equation}
In (\ref{eqn:finite_forward_diff_objective}) and (\ref{eqn:finite_forward_diff_estimation}), we have two polynomials in $\beta$ that we want to be equal so all their coefficients have to be the same. We hence need to solve
\begin{equation}
    \label{eqn:coeff_system}
    \left ( i^k \right )_{i,k}\alpha = \left ( \begin{array}{c} 0\\1\\0\\ \vdots \end{array} \right )\!.
\end{equation}
where $\left ( i^k \right )_{i,k}$ is a $p\times p$ invertible Vandermonde matrix. The resolution of such a system can easily be done numerically. The values of $\alpha$ for small $p$ are:
\begin{equation*}
    \arraycolsep=10pt\def\arraystretch{1.2}
    \begin{array}{|c|ccccc|}
        \hline
        p & \alpha_0 & \alpha_1 & \alpha_2 & \alpha_3 & \alpha_4\\
        \hline
        2 & -1 & 1 & 0 & 0 & 0\\
        3 & -3/2 & 2 & -1/2 & 0 & 0\\
        4 & -11/6 & 3 & -3/2 & 1/3 & 0\\
        5 & -25/12 & 4 & -3 & 4/3 & -1/4\\
        \hline
    \end{array}
\end{equation*}
Note that the $2$-forward finite difference learning rule is the same as the finite difference one. In the following, we assume that $\alpha$ satisfies (\ref{eqn:coeff_system}). The resulting algorithm is presented in Algorithm \ref{alg:extended_EP}.
\begin{algorithm}[ht]
    \caption{$p$-forward finite difference learning rule}
    \label{alg:extended_EP}
    \KwResult{Estimation of $\nabla_\theta$}
    For every $i\in\{0, \dots, (p-1)\beta\}$, minimize $\phi\mapsto\Lt(\phi,\theta,i\beta)$, starting from the solution of last step, and note $\hat{\phi}_{i\beta}$ the result;\\
    Estimate $\nabla_\theta$, the derivative of $\theta \mapsto \Lo(\phist,\theta)$, using
    \begin{equation*}
        (\widehat{\nabla}_\theta^p)^\top = \frac{1}{\beta}\sum_{i=0}^{p-1}\alpha_i\pder{\theta}{\Lt}(\hat{\phi}_{i\beta}, \theta, i\beta).
    \end{equation*}
    Return $\widehat{\nabla}_\theta^p$;
\end{algorithm}

\paragraph{Why forward finite differences?} There exists different kind of finite difference estimators that use multiple points\footnote{See \url{https://en.wikipedia.org/wiki/Finite_difference}.}. We chose to present the forward difference ones above as they are the ones that only use estimates for positive $\beta$ values. We illustrate why this may be important on an example.

When using $p=3$ points, the bias reduction obtained with a forward estimate is similar to the obtained with the symmetric or central estimate
\begin{equation}
    \frac{1}{2\beta}\left ( \pder{\theta}{\Lt}(\phi^*_{\theta, \beta}, \theta, \beta) - \pder{\theta}{\Lt}(\phi^*_{\theta, -\beta}, \theta, -\beta) \right )
\end{equation}
that is used in \citet{laborieux_scaling_2021}. However, negative $\beta$ values can prove to be problematic. To illustrate that, consider $\Li$ and $\Lo$, two $\mu$-strongly convex and $L$-smooth functions (e.g. $\Li(\phi)=L \snorm{\phi}^2/2$ and $\Lo(\phi)=\mu \snorm{\phi}^2/2$). Then $\Lt$ (here equal to $(L+\beta\mu)\snorm{\phi}^2/2$) is not bounded from below when $\beta < -L/\mu$ so $\phi^*_{\theta,-\beta}$ does not exist anymore and the estimate diverges. When using negative $\beta$ values one therefore as to be careful that all the phases converge.

\paragraph{When are multiple points estimators worth it?} One can think that adding more points will always lead to a more precise estimation of the outer gradient. If the approximations $\hat{\phi}_{i\beta}$ of the minimizers $\phi^*_{i\beta, \theta}$ are perfect, it will always be the case as the bias is a $O(\beta^p)$ with $p$ the number of points. Using the terminology used in the proof of Theorem~\ref{thm:ep_analysis}, this means that the finite difference error decreases when the number of point increases. However, when we can only obtain approximate minimizers, adding more points aggregates the fixed-point approximation errors made in each phase thus potentially making the estimation error bigger. Whether more points would be useful or not is therefore a practical matter.

\end{document}